\title{Pre-Training Protein Encoder via Siamese Sequence-Structure Diffusion Trajectory Prediction}
\author{
\textbf{Zuobai Zhang}\textsuperscript{\rm 1,2$\,*$}, \textbf{Minghao Xu}\textsuperscript{\rm 1,2$\,*$}, \textbf{Aur\'{e}lie Lozano}\textsuperscript{\rm 3}, \\\textbf{Vijil Chenthamarakshan}\textsuperscript{\rm 3}, \textbf{Payel Das}\textsuperscript{\rm 3$\,\dagger$}, \textbf{Jian Tang}\textsuperscript{\rm 1,4,5$\,\dagger$} \\
\textsuperscript{\rm *}{\small equal contribution} \quad
\textsuperscript{\rm $\dagger$}{\small corresponding author} \\
\textsuperscript{\rm 1}Mila - Qu\'ebec AI Institute, 
\textsuperscript{\rm 2}Universit\'e de Montr\'eal,
\textsuperscript{\rm 3}IBM Research,\\
\textsuperscript{\rm 4}HEC Montr\'eal, 
\textsuperscript{\rm 5}CIFAR AI Chair \\
\texttt{\{zuobai.zhang, minghao.xu\}@mila.quebec}, \\
\texttt{\{ecvijil,aclozano,daspa\}@us.ibm.com},
\texttt{jian.tang@hec.ca}
}
\newtheorem{proposition}{Proposition}
\newcommand{\model}{GearNet\xspace}
\newcommand{\mathbbm}[1]{\text{\usefont{U}{bbm}{m}{n}#1}}
\definecolor{myblue}{RGB}{0, 0, 255}
\definecolor{myorange}{RGB}{237, 125, 49}
\setlist[enumerate]{leftmargin=7.5mm}
\def\eqref#1{equation~\ref{#1}}
\def\1{\bm{1}}
\def\va{{\bm{a}}}
\def\vh{{\bm{h}}}
\def\vr{{\bm{r}}}
\def\mP{{\bm{P}}}
\DeclareMathAlphabet{\mathsfit}{\encodingdefault}{\sfdefault}{m}{sl}
\SetMathAlphabet{\mathsfit}{bold}{\encodingdefault}{\sfdefault}{bx}{n}
\def\gD{{\mathcal{D}}}
\def\gF{{\mathcal{F}}}
\def\gL{{\mathcal{L}}}
\def\gN{{\mathcal{N}}}
\def\gP{{\mathcal{P}}}
\def\ggP{{\bm{\mathcal{P}}}}
\def\gR{{\mathcal{R}}}
\def\gS{{\mathcal{S}}}
\def\gX{{\mathcal{X}}}
\def\gY{{\mathcal{Y}}}
\newcommand{\E}{\mathbb{E}}
\newcommand{\R}{\mathbb{R}}
\let\oldAA\AA
\renewcommand{\AA}{\text{\normalfont\oldAA}}
\definecolor{r1}{RGB}{240,128,128}
\definecolor{r2}{RGB}{168,233,191}
\definecolor{r3}{RGB}{230,230,250}
\definecolor{t1}{RGB}{255,0,0}
\definecolor{t2}{RGB}{0,0,255}
\definecolor{t3}{RGB}{255, 127, 80}
\definecolor{revision}{RGB}{255,0,0}
\setlist[itemize]{leftmargin=5.4mm}
\setlist[enumerate]{leftmargin=4.3mm}
\newcommand{\method}{SiamDiff}
\newcommand{\baseline}{DiffPreT}
\begin{document}

\maketitle

%%%%%%%%%%%%%%%%%%%%%%%%%%%%%%%%%%%%%%%%%%%%%%%%%%%%%%%%%%%%

\begin{abstract}

%%%%% Version 6 (Current version) %%%%%%

% Self-supervised pre-training methods on proteins are recently gaining interest, leveraging either protein sequences or structures, while modeling their joint distribution is largely unexplored.
Self-supervised pre-training methods on proteins have recently gained attention, with most approaches focusing on either protein sequences or structures, neglecting the exploration of their joint distribution, which is crucial for a comprehensive understanding of protein functions by integrating co-evolutionary information and structural characteristics.
In this work, inspired by the success of denoising diffusion models in generative tasks, we propose the \textbf{\baseline} approach to pre-train a protein encoder by sequence-structure joint diffusion modeling. {\baseline} guides the encoder to recover the native protein sequences and structures from the perturbed ones along the joint diffusion trajectory, which acquires the joint distribution of sequences and structures. 
Considering the essential protein conformational variations, 
we enhance {\baseline} by a method called Siamese Diffusion Trajectory Prediction (\textbf{\method}) to capture the correlation between different conformers of a protein. {\method} attains this goal by maximizing the mutual information between representations of diffusion trajectories of structurally-correlated conformers. 
We study the effectiveness of {\baseline} and {\method} on both atom- and residue-level structure-based protein understanding tasks. 
Experimental results show that the performance of {\baseline} is consistently competitive on all tasks, and {\method} achieves new state-of-the-art performance, considering the mean ranks on all tasks. 
Our implementation is available at \url{https://github.com/DeepGraphLearning/SiamDiff}.

\end{abstract}

%%%%%%%%%%%%%%%%%%%%%%%%%%%%%%%%%%%%%%%%%%%%%%%%%%%%%%%%%%%%
%%%%%%%%%%%%%%%%%%%%%%%%%%%%%%%%%%%%%%%%%%%%%%%%%%%%%%%%%%%%

\section{Introduction} \label{sec:intro}

Machine learning-based methods have made remarkable strides in predicting protein structures~\citep{jumper2021highly,baek2021accurate,lin2022language} and functionality~\citep{meier2021language,gligorijevic2021structure}. 
Among them, self-supervised (unsupervised) pre-training approaches~\cite{elnaggar2020prottrans,rives2021biological,zhang2022protein} have been successful in learning effective protein representations from available protein sequences or from their experimental/predicted structures. 
These pre-training approaches are based on the rationale that modeling the input distribution of proteins provides favorable initialization of model parameters and serves as effective regularization for downstream tasks~\citep{goodfellow2016deep}.
% \textcolor{blue}{
% These approaches fall under the umbrella of energy-based models (EBM) that aim to reduce the energy of training data points while raising the energy of the rest~\citep{lecun2021ssl}, which can be optimized either in contrastive or generative ways~\citep{liu2021self}. 
% By fitting an EBM for the marginal distribution of a single modality,} 
% Considering the marginal distribution of a single modality, protein sequence-based pre-training can acquire co-evolutionary information~\cite{elnaggar2020prottrans,rives2021biological}, while structure-based pre-training can capture protein structural characteristics necessary for tasks such as function prediction and fold classification~\citep{zhang2022protein,hermosilla2022contrastive}. 
Previous methods have primarily focused on modeling the marginal distribution of either protein sequences to acquire co-evolutionary information~\cite{elnaggar2020prottrans, rives2021biological}, or protein structures to capture essential characteristics for tasks such as function prediction and fold classification~\citep{zhang2022protein, hermosilla2022contrastive}.
Nevertheless, both these forms of information hold significance in revealing the underlying functions of proteins and offer complementary perspectives that are still not extensively explored.
To address this gap, a more promising approach for pre-training could involve modeling the joint distribution of protein sequences and structures, surpassing the limitations of unimodal pre-training methods.%, which is largely unexplored.

To model this joint distribution, denoising diffusion models~\citep{ho2020denoising, song2020score} have recently emerged as one of the most effective methods due to their simple training objective and high sampling quality and diversity~\citep{anand2022protein, luo2022antigen, ingraham2022illuminating, watson2022broadly}.
However, the application of diffusion models has predominantly been explored in the context of generative tasks, rather than within pre-training and fine-tuning frameworks that aim to learn effective representations for downstream tasks.
In this work, we present a novel approach that adapts denoising diffusion models to pre-train structure-informed protein encoders\footnote{It is important to note that protein structure encoders in this context take both sequences and structures as input, distinguishing them from protein sequence encoders as established in previous works.}. 
Our proposed approach, called \textbf{\baseline}, gradually adds noise to both protein sequence and structure to transform them towards random distribution, and then denoises the corrupted protein structure and sequence using a noise prediction network parameterized with the output of the protein encoder.
This approach enables the encoder to learn informative representations that capture the inter-atomic interactions within the protein structure, the residue type dependencies along the protein sequence, and the joint effect of sequence and structure variations. 

In spite of these advantages, {\baseline} ignores the fact that any protein structure exists as a population of interconverting conformers, and elucidating this conformational heterogeneity is essential for predicting protein function and ligand binding~\citep{grant2010large}.
In both {\baseline} and previous studies, no explicit constraints are added to acquire the structural correlation between different conformers of a specific protein or between structural homologs, which prohibits capturing the conformational energy landscape of a protein~\citep{nienhaus1997exploring}. 
%In fact, the conformational changes between different conformers can indicate many protein functions~\citep{grant2010large}. 
Therefore, to consider the physics underlying the conformational change,  
we propose Siamese Diffusion Trajectory Prediction (\textbf{\method}) to augment the {\baseline} by maximizing the mutual information between representations of diffusion trajectories of structurally-correlated conformers (\emph{i.e.}, siamese diffusion trajectories).
We first adopt a torsional perturbation scheme on the side chain to generate randomly simulated conformers~\citep{ho2009probing}.
Then, for each protein, we generate diffusion trajectories for a pair of its conformers.
We theoretically prove that the problem can be transformed to the mutual prediction of the trajectories using representations from their counterparts.
In this way, the model can keep the advantages of {\baseline} and inject conformer-related information into representations as well. 

Both {\baseline} and {\method} can be flexibly applied to atom-level and residue-level structures to pre-train protein representations. 
To thoroughly assess their capabilities, we conduct extensive evaluations of the pre-trained models on a wide range of downstream protein understanding tasks. These tasks encompass protein function annotation, protein-protein interaction prediction, mutational effect prediction, residue structural contributions, and protein structure ranking.
In comparison to existing pre-training methods that typically excel in only a subset of the considered tasks, {\baseline} consistently delivers competitive performance across all tasks and at both the atomic and residue-level resolutions. Moreover, {\method} further enhances model performance, surpassing previous state-of-the-art results in terms of mean ranks across all evaluated tasks. 
\section{Related Work} \label{sec:related}

% \textbf{Protein Structure Representation Learning.} The community witnessed a surge of research interests in learning informative protein structure representations using structure-based encoders and training algorithms. The encoders are designed to capture protein structural information on different granularity, including residue-level structures~\citep{gligorijevic2021structure,zhang2022protein,xu2022eurnet}, atom-level structures~\citep{hermosilla2020intrinsic,jing2021equivariant,wang2022learning} and protein surfaces~\citep{gainza2020deciphering,sverrisson2021fast,somnath2021multi}. 

\textbf{Pre-training Methods on Proteins.} 
% \textbf{Sequence-based Protein Pre-training.} 
Self-supervised pre-training methods have been widely used to acquire co-evolutionary information from large-scale protein sequence corpus, inducing performant protein language models (PLMs)~\citep{elnaggar2020prottrans,lu2020self,rives2021biological,lin2022language}. Typical sequence pre-training methods include masked protein modeling~\citep{elnaggar2020prottrans,rives2021biological,lin2022language} and contrastive learning~\cite{lu2020self}. The pre-trained PLMs have achieved impressive performance on a variety of downstream tasks for structure and function prediction~\citep{rao2019evaluating,xu2022peer}. 
% However, by pre-training with protein sequences alone, the PLMs can hardly acquire detailed protein structural characteristics.
% Our proposed {\baseline} and {\method} can tackle this limitation by additionally involving protein structures for pre-training. 
% \textbf{Structure-based Protein Pre-training.} 
Recent works have also studied pre-training on unlabeled protein structures for generalizable representations, covering contrastive learning~\citep{zhang2022protein,hermosilla2022contrastive}, self-prediction of geometric quantities~\citep{zhang2022protein,chen2022structure} and denoising score matching~\citep{Guo2022SelfSupervisedPF,wu2022pre}. 
% Despite the effectiveness on extracting informative protein structure representations, these methods can hardly acquire the co-evolutionary information hidden in massive protein sequences. The proposed {\baseline} and {\method} introduce the protein sequence diffusion scheme to better model protein sequences on different noise levels. 
Compared with existing works, our methods model the joint distribution of sequences and structures via diffusion models, which captures both co-evolutionary information and detailed structural characteristics.

\textbf{Diffusion Probabilistic Models (DPMs).} DPM was first proposed in \citet{sohl2015deep} and has been recently rekindled for its strong performance on image and waveform generation~\citep{ho2020denoising,chen2020wavegrad}. 
% Recent works~\citep{nichol2021improved,song2020score,song2021maximum} have improved training and sampling for DPMs.
% via better learning objective~\citep{ho2020denoising}, better noise schedule~\citep{nichol2021improved} and continuous-time formulation~\citep{song2020score,song2021maximum}. 
While DPMs are commonly used for modeling continuous data, there has also been research exploring discrete DPMs that have achieved remarkable results on generating texts~\citep{austin2021structured,li2022diffusion}, graphs~\citep{vignac2022digress} and images~\citep{hoogeboom2021argmax}. 
Inspired by these progresses, DPMs have been adopted to solve problems in chemistry and biology domain, including molecule generation~\citep{xu2022geodiff,hoogeboom2022equivariant,wu2022diffusion,jing2022torsional}, molecular representation learning~\citep{liu2022molecular}, protein structure prediction~\citep{wu2022protein}, protein-ligand binding~\citep{corso2022diffdock}, protein design~\citep{anand2022protein,luo2022antigen,ingraham2022illuminating,watson2022broadly} and motif-scaffolding~\citep{trippe2022diffusion}. In alignment with recent research efforts focused on diffusion-based image representation learning~\citep{abstreiter2021diffusion}, this work presents a novel investigation into how DPMs can contribute to protein representation learning.

%%%%% Version 1 %%%%%%

% DPM was first proposed in \citet{sohl2015deep} and has been recently rekindled for its strong performance on image and waveform generation benchmarks~\cite{ho2020denoising,chen2020wavegrad}. To further improve training and sampling for DPMs, recent works propose reweighted variational bound objective~\cite{ho2020denoising}, better noise schedules~\cite{nichol2021improved}, implicit DPMs~\cite{song2020denoising} and continuous-time DPMs~\cite{song2020score,song2021maximum,kingma2021variational}. Besides these works focusing on the DPMs for continuous data, some works study the diffusion modeling on discrete data, including binary random variables~\cite{sohl2015deep} and categorical random variables~\cite{hoogeboom2021argmax,austin2021structured}. These discrete DPMs show impressive results on generating texts, images and image segmentation data. 

% Inspired by these progresses, DPMs have been recently adopted to solve the problems in chemistry and biology domain, including molecular conformation generation~\cite{xu2022geodiff,hoogeboom2022equivariant}, molecular representation learning~\cite{liu2022molecular}, protein design~\cite{anand2022protein,luo2022antigen} and motif-scaffolding~\cite{trippe2022diffusion}. However, it has not been studied how DPMs can help protein representation learning. In this work, we take the initiative of learning effective protein representations by diffusion modeling on both protein structures and sequences. 

%%%%%%%%%%%%%%%%%%%%%%

%%%%%%%%%%%%%%%%%%%%%%%%%%%%%%%%%%%%%%%%%%%%%%%%%%%%%%%%%%%%
% \vspace{-0.3cm}
\section{\baseline: Diffusion Models for Pre-Training} 
\label{sec:diffusion}

% \vspace{-0.1cm}
Recently, there have been promising progress on applying denoising diffusion models for protein structure-sequence co-design~\citep{luo2022antigen,anand2022protein}. 
The effectiveness of the \emph{joint diffusion model} on modeling the distribution of proteins suggests that the process may reflect physical and chemical principles underlying protein formation~\citep{anfinsen1972formation,dill2012protein}, which could be beneficial for learning informative representations.
Based on this intuition, in this section, we explore the application of joint diffusion models on pre-training protein encoders in a pre-training and fine-tuning framework.

% We first introduce the definition of diffusion models on proteins and \textcolor{blue}{how to use it in a pre-training and fine-tuning framework} in Sec.~\ref{sec:diff_protein}, then discuss the parameterization of forward and reverse processes for diffusion on protein structures in Sec.~\ref{sec:diff_struct} and sequences in Sec.~\ref{sec:diff_seq} and derive the final objective for pre-training in Sec.~\ref{sec:diff_loss}.

% \vspace{-0.3cm}
\subsection{Preliminary}
\label{sec:prelinimary}

% \vspace{-0.1cm}
\textbf{Notation.}
% This work focuses on learning representations for proteins with diffusion models.
A protein with $n_r$ residues (amino acids) and $n_a$ atoms can be represented as a sequence-structure tuple $\gP=(\gS,\gR)$.
We use $\gS=[s_1,s_2,\cdots,s_{n_r}]$ to denote its sequence with $s_i$ as the type of the $i$-th residue, while  $\gR=[\vr_1,\vr_2...,\vr_{n_a}]\in\R^{n_a\times 3}$ denotes its structure with $\vr_i$ as the Cartesian coordinates of the $i$-th atom.
To model the structure, we construct a graph for each protein with edges connecting atoms whose Euclidean distance below a certain threshold.

\textbf{Equivariance.}
\emph{Equivariance} is ubiquitous in machine learning for modeling the symmetry in physical systems~\citep{thomas2018tensor,weiler20183d} and is shown to be
%, \emph{e.g.}, forces on each atom should rotate accordingly \emph{w.r.t.} the molecule conformation.
%Such inductive bias has been shown 
critical for successful design and better generalization  of 3D networks~\citep{kohler2020equivariant}.
Formally, a function $\gF:\gX\rightarrow\gY$ is equivariant \emph{w.r.t.} a group $G$ if
% \begin{align*}
    $\gF\circ \rho_{\gX}(x) = \rho_{\gY}\circ\gF(x)$,
% \end{align*}
where $\rho_{\gX}$ and $\rho_{\gY}$ are transformations corresponding to an element $g\in G$ acting on the space $\gX$ and $\gY$, respectively.
The function is invariant \emph{w.r.t} $G$ if the transformations $\rho_{\gY}$ is identity.
In this paper, we consider SE(3) group, \emph{i.e.}, rotations and translations in 3D space.
%, and aim to learn roto-translation invariant representations \emph{w.r.t} protein structures.

\textbf{Problem Definition.}
Given a set of unlabeled proteins $\gD=\{\gP_1,\gP_2,...\}$, our goal is to train a protein encoder $\phi(\gS,\gR)$ to extract informative $d$-dimensional residue representations $\vh\in\R^{n_r\times d}$ and atom representations $\va\in\R^{n_a\times d}$ that are SE(3)-invariant \emph{w.r.t.} protein structures $\gR$.

% \vspace{-0.3cm}
\subsection{Diffusion Models on Proteins}
\label{sec:diff_protein}

%%%%%%%%%%%%%%%%%%%%%%%%%%%%%%%%%%%%%%%%%%%%%%%%%%%%%%%%%%%%

\begin{figure*}[t]
    \centering
    \setlength\belowcaptionskip{0pt} 
    \includegraphics[width=0.9\linewidth]{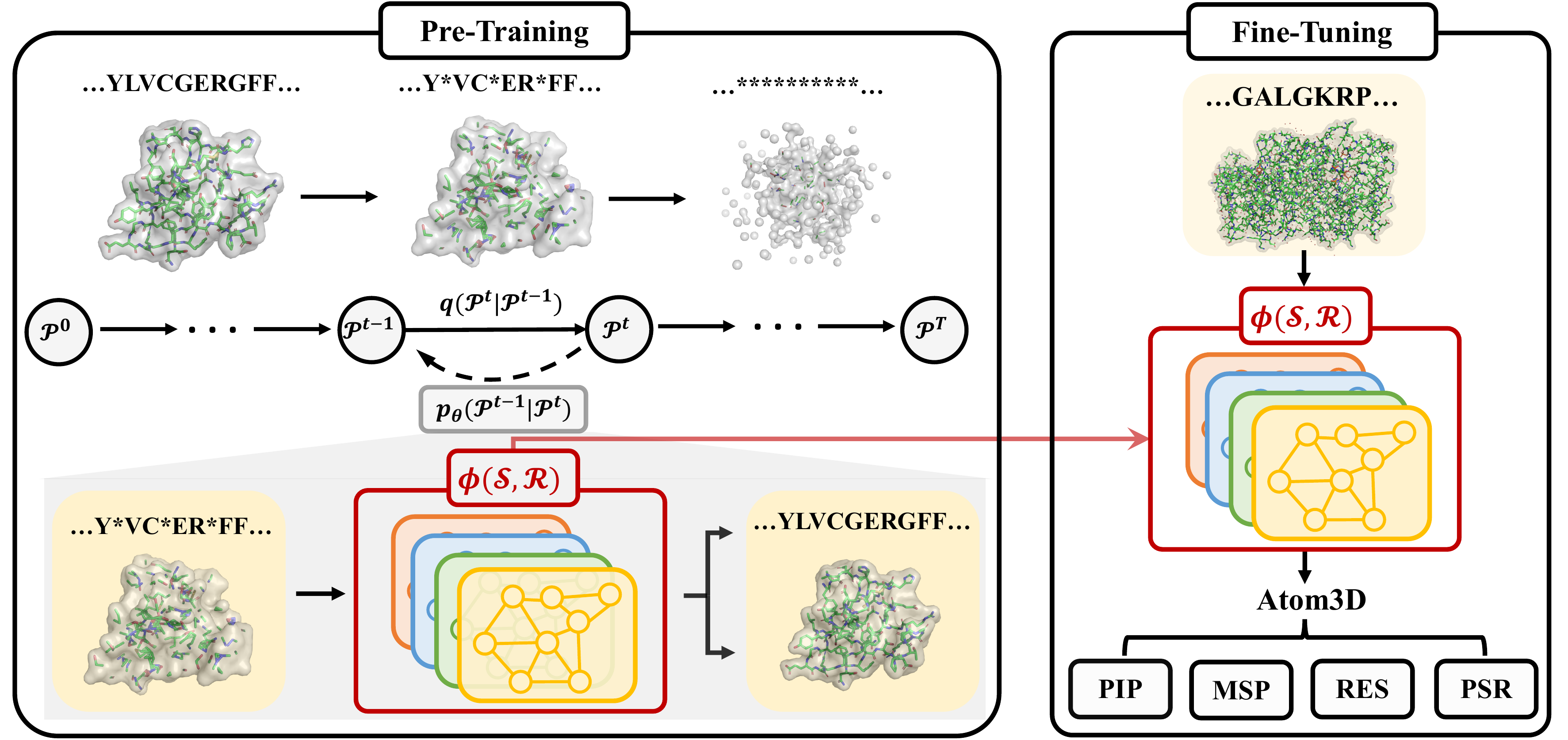}
    \caption{A pre-training and fine-tuning framework for \baseline. 
    During pre-training, diffusion processes are applied to both protein structures and sequences, with * indicating masked residues. Noise prediction networks, parameterized with an encoder $\phi(\gS,\gR)$, are employed to restore the original states. The learned encoder $\phi(\gS,\gR)$ is subsequently fine-tuned on downstream tasks.
    }
    \label{fig:diffpret}
\end{figure*}

%%%%%%%%%%%%%%%%%%%%%%%%%%%%%%%%%%%%%%%%%%%%%%%%%%%%%%%%%%%%
\vspace{-0.1cm}
Diffusion models are a class of deep generative models with latent variables encoded by a \emph{forward diffusion process} and decoded by \emph{a reverse generative process}~\citep{sohl2015deep}.
We use $\gP^0$ to denote the ground-truth protein and $\gP^t$ for $t=1,\cdots,T$ to be the latent variables over $T$ diffusion steps.
Modeling the protein as an evolving thermodynamic system, the forward process gradually injects small noise to the data $\gP^0$ until reaching a random noise distribution at time $T$.
The reverse process learns to denoise the latent variable towards the data distribution.
Both processes are defined as Markov chains:
% \vspace{-1mm}
\begin{equation}
\setlength{\abovedisplayskip}{3pt}
\setlength{\belowdisplayskip}{3pt}  
    q(\gP^{1:T}|\gP^0)= \begin{matrix}\prod\nolimits_{t=1}^T q(\gP^{t}|\gP^{t-1})\end{matrix},\;\;
    p_{\theta}(\gP^{0:T-1}|\gP^T)= \begin{matrix}\prod\nolimits_{t=1}^T p_{\theta}(\gP^{t-1}|\gP^{t})\end{matrix},
\end{equation}
where $q(\gP^{t}|\gP^{t-1})$ defines the forward process at step $t$ and  $p_{\theta}(\gP^{t-1}|\gP^{t})$ with learnable parameters $\theta$ defines  the reverse process at step $t$.
We decompose the forward process into diffusion on protein structures and sequences, respectively.
The decomposition is represented as follows:
\begin{equation}
\setlength{\abovedisplayskip}{3pt}
\setlength{\belowdisplayskip}{3pt} 
    q(\gP^{t}|\gP^{t-1})= q(\gR^{t}|\gR^{t-1})\cdot q(\gS^{t}|\gS^{t-1}),\;\;%\label{eq:indep_q}
    p_{\theta}(\gP^{t-1}|\gP^{t})= p_{\theta}(\gR^{t-1}|\gP^t)\cdot p_{\theta}(\gS^{t-1}|\gP^t),\label{eq:indep_p}
\end{equation}
where the reverse processes use representations $\va^t$ and $\vh^t$ learned by the protein encoder $\phi_{\theta}(\gS^t,\gR^t)$.

\textbf{Forward diffusion process $q(\gP^{t}|\gP^{t-1})$.}
For diffusion on protein structures, we introduce random Gaussian noises to the 3D coordinates of the structure. For diffusion on sequences, we utilize a Markov chain approach with an absorbing state [MASK], where each residue either remains the same or transitions to [MASK] with a certain probability at each time step~\citep{austin2021structured}.
Specifically, we have:
\begin{equation}
\setlength{\abovedisplayskip}{3pt}
\setlength{\belowdisplayskip}{3pt}  
    q(\gR^{t}|\gR^{t-1})
    = \gN(\gR^t;\sqrt{1-\beta_t}\gR^{t-1},\beta_t I),\;\;
    q(\gS^t|\gS^{t-1})=\text{random\_mask}(\gS^{t-1},\rho_t),\label{eq:diffusion_struct}
\end{equation}
Here, $\beta_1,...,\beta_T$ and $\rho_1,...,\rho_T$ are a series of fixed variances and masking ratios, respectively. 
$\text{random\_mask}(\gS^{t-1},\rho_t)$ denotes the random masking operation, where each residue in $\gS^{t-1}$ is masked with a probability of $\rho_t$ at time step $t$.

\textbf{Reverse process on structures $p_{\theta}(\gR^{t-1}|\gP^{t})$.}
The reverse process on structures is parameterized as a Gaussian with a learnable mean $\mu_\theta(\gP^t,t)$ and user-defined variance $\sigma_t$.
Given the availability of $\gR^t$ as an input, we reparameterize the mean $\mu_\theta(\gP^t,t)$ following~\citet{ho2020denoising}:
\begin{equation}
\setlength{\abovedisplayskip}{1pt}
\setlength{\belowdisplayskip}{1pt}
    p_{\theta}(\gR^{t-1}|\gP^t)=\gN(\gR^{t-1};\mu_\theta(\gP^t,t),\sigma_t^2 I),\;\;
    \mu_\theta(\gP^t,t)
    =\begin{matrix}\frac{1}{\sqrt{\alpha_t}}\left(\gR^t-\frac{\beta_t}{\sqrt{1-\bar{\alpha}_t}}\epsilon_\theta(\gP^t,t)\right)\end{matrix},\label{eq:diffusion_struct_noise}
\end{equation}
where $\alpha_t=1-\beta_t$, $\bar{\alpha}_t=\prod_{s=1}^t \alpha_s$ and the  network $\epsilon_\theta(\cdot)$ learns to decorrupt the data and should be translation-invariant and rotation-equivariant \emph{w.r.t.}  the structure $\gR^t$.

To define our noise prediction network, we utilize the atom representations $\va^t$ (which is guaranteed to be SE(3)-invariant \emph{w.r.t.} $\gR^t$ by the encoder) and atom coordinates $\vr^t$ (which is SE(3)-equivariant \emph{w.r.t.} $\gR^t$).
We build an equivariant output based on normalized directional vectors between adjacent atom pairs.
Each edge $(i,j)$ is encoded by its length $\|\vr_{i}^t-\vr_{j}^t\|_2$ and the representations of two end nodes $\va_{i}^{t}$, $\va_{j}^{t}$, and the encoded score $m_{i,j}$ will be used for aggregating directional vectors.
Specifically,
\begin{equation}
\setlength{\abovedisplayskip}{0.5pt}
\setlength{\belowdisplayskip}{0.5pt}
    [\epsilon_\theta(\gP^t,t)]_i =\begin{matrix} \sum\nolimits_{j\in\gN^{t}(i)} m_{i,j}\cdot \frac{\vr_{i}^t-\vr_{j}^t}{\left\|\vr_{i}^t-\vr_{j}^t\right\|_2}\end{matrix},\;\; \text{with}\;
    m_{i,j} =\begin{small} \text{MLP}(\va_{i}^{t},\va_{j}^{t}, \text{MLP}(\|\vr_{i}^t-\vr_{j}^t\|_2))
    \end{small},
\end{equation}
where $\gN^{t}(i)$ denotes the neighbors of the atom $i$ in the corresponding graph of $\gP^t$.
Note that $\epsilon_\theta(\gP^t,t)$ achieves the equivariance requirement, as $m_{i,j}$ is SE(3)-invariant \emph{w.r.t.} $\gR^t$ while $\vr_{i}^t-\vr_{j}^t$ is translation-invariant and rotation-equivariant \emph{w.r.t.} $\gR^t$.

\textbf{Reverse process on sequences $p_{\theta}(\gS^{t-1}|\gP^{t})$.}
For the reverse process $p_{\theta}(\gS^{t-1}|\gP^{t})$, we adopt the parameterization proposed in~\citep{austin2021structured}. 
The diffusion trajectory is characterized by the probability $q(\gS^{t-1}|\gS^{t},\tilde{\gS}^{0})$, and we employ a network $\tilde{p}\theta$ to predict the probability of $\gS^0$:
\begin{equation}
\setlength{\abovedisplayskip}{3pt}
\setlength{\belowdisplayskip}{3pt}
    p_\theta(\gS^{t-1}|\gP^t)\propto
    \begin{matrix}
    \sum\nolimits_{\tilde{\gS}^0}
    q(\gS^{t-1}|\gS^{t},\tilde{\gS}^{0})\cdot
    \tilde{p}_\theta(\tilde{\gS}^0|\gP^t)\end{matrix},\label{eq:diffusion_seq_p}
\end{equation}
We define the predictor $\tilde{p}_\theta$ with residue representations $\vh^t$.
For each masked residue $i$ in $\gS^t$, we feed its representation $\vh_{i}^t$ to an MLP and predict the type of the corresponding residue type $s^0_{i}$ in $\gS^0$:
\begin{equation}
\label{eq:generation_seq}
\setlength{\abovedisplayskip}{3pt}
\setlength{\belowdisplayskip}{3pt}
\tilde{p}_\theta(\tilde{\gS}^0|\gP^t)=\begin{matrix}
    \prod\nolimits_i
    \tilde{p}_\theta(\tilde{s}^0_{i}|\gP^t)
    \end{matrix}
    =\begin{matrix}
    \prod\nolimits_i\text{Softmax}(\tilde{s}_{i}^0|\text{MLP}(\vh_{i}^t))
    \end{matrix},
\end{equation}
where the softmax function is applied over all residue types.

\vspace{-0.3cm}
\subsection{Pre-Training Objective}
\label{sec:diff_loss}
\vspace{-0.1cm}
Now we derive the pre-training objective of {\baseline} by optimizing the diffusion model above with the ELBO loss~\citep{ho2020denoising}:
\begin{equation}
\setlength{\abovedisplayskip}{0pt}
\setlength{\belowdisplayskip}{0pt}
    \begin{adjustbox}{max width=0.85\columnwidth}
        $\begin{matrix}
        \gL \coloneqq \E\left[\sum\nolimits_{t=1}^T D_{\text{KL}}\left(q(\gP^{t-1}|\gP^t,\gP^0)||p_\theta(\gP^{t-1}|\gP^t)\right)\right].
        \end{matrix}$
    \end{adjustbox}
    \label{eq:diff_obj}
\end{equation}
Under the assumptions in (\ref{eq:indep_p}), it can be shown that the objective can be decomposed into a structure loss $\gL^{(\gR)}$ and a sequence loss $\gL^{(\gS)}$ (see proof in App.~\ref{app:sec:proof_loss}):
\begin{equation}  
\setlength{\abovedisplayskip}{3pt}
\setlength{\belowdisplayskip}{3pt}
\begin{split}
\gL^{(\gR)}\coloneqq&\E\left[\begin{matrix}\sum\nolimits_{t=1}^T D_{\text{KL}}\left(q(\gR^{t-1}|\gR^{t},\gR^0)||p_\theta(\gR^{t-1}|\gP^t)\right)\end{matrix}\right],
\\
\gL^{(\gS)}\coloneqq&\E\left[\begin{matrix}\sum\nolimits_{t=1}^T D_{\text{KL}}\left(q(\gS^{t-1}|\gS^{t},\gS^0)||p_\theta(\gS^{t-1}|\gP^t)\right)\end{matrix}\right].
\end{split}
\end{equation}
Both loss functions can be simplified as follows.

\textbf{Structure loss} $\gL^{(\gR)}$.
It has been shown in~\citet{ho2020denoising} that the loss function can be simplified under our parameterization by calculating KL divergence between Gaussians as weighted L2 distances between means $\epsilon_\theta$ and $\epsilon$ (see details in App.~\ref{app:sec:proof_loss_struct}):
\begin{equation}
\setlength{\abovedisplayskip}{3pt}
\setlength{\belowdisplayskip}{3pt}
    \gL^{(\gR)}
    =\begin{matrix}\sum\nolimits_{t=1}^T \gamma_t \E_{\epsilon\sim \gN(0,I)}\left[\|\epsilon - \epsilon_\theta(\gP^t,t)\|_2^2\right]\end{matrix},
\end{equation}
where the coefficients $\gamma_t$ are determined by the variances $\beta_1,...,\beta_t$.
In practice, we follow~\citet{ho2020denoising} to set all weights $\gamma_t=1$ for the simplified loss $\gL_{\text{simple}}^{(\gR)}$.

Since $\epsilon_\theta$ is designed to be rotation-equivariant \emph{w.r.t.} $\gR^t$, to make the loss function invariant \emph{w.r.t.} $\gR^t$, the supervision $\epsilon$ is also supposed to achieve such equivariance.
Therefore, we adopt the chain-rule approach proposed in~\citet{xu2022geodiff}, which decomposes the noise on pairwise distances to obtain the modified noise vector $\hat{\epsilon}$ as supervision.
We refer readers to~\citet{xu2022geodiff} for more details.

\textbf{Sequence loss}  $\gL^{(\gS)}$.
Since we parameterize $p_\theta(\gS^{t-1}|\gP^t)$ with $\tilde{p}_\theta(\tilde{\gS}^0|\gP^t)$ and $q(\gS^{t-1}|\gS^{t},\tilde{\gS}^{0})$ as in (\ref{eq:diffusion_seq_p}),
it can be proven that the $t$-th KL divergence term in $\gL^{(\gS)}$ reaches zero when $\tilde{p}_\theta(\tilde{\gS}^0|\gP^t)$ assigns all mass on the ground truth $\gS^0$ (see proof in App.~\ref{app:sec:proof_loss_seq}).
Therefore, for pre-training, we can simplify the KL divergence to the cross-entropy between the correct residue type $s_{i}^0$ and the prediction:
\begin{equation}
\setlength{\abovedisplayskip}{3pt}
\setlength{\belowdisplayskip}{3pt}
    \gL_{\text{simple}}^{(\gS)}
    =\begin{matrix}\sum\nolimits_{t=1}^T \sum\nolimits_i \text{CE}\left(s_{i}^0,\tilde{p}_\theta(s_{i}^{0}|\gP^t)\right)\end{matrix},
\end{equation}
where CE$(\cdot,\cdot)$ denotes the cross-entropy loss.

The ultimate training objective is the sum of simplified structure and sequence diffusion losses:
\begin{equation}
\setlength{\abovedisplayskip}{2pt}
\setlength{\belowdisplayskip}{1pt}
    \gL_{\text{simple}}
    = 
    \gL_{\text{simple}}^{(\gR)}+
    \gL_{\text{simple}}^{(\gS)}.
\end{equation}

% \vspace{-0.3cm}
\subsection{Two-Stage Noise Scheduling}

\begin{wrapfigure}{R}{0.45\textwidth}
\begin{minipage}[b]{0.45\textwidth}
    \vspace{-2.3em}
    \centering
    \includegraphics[width=\linewidth]{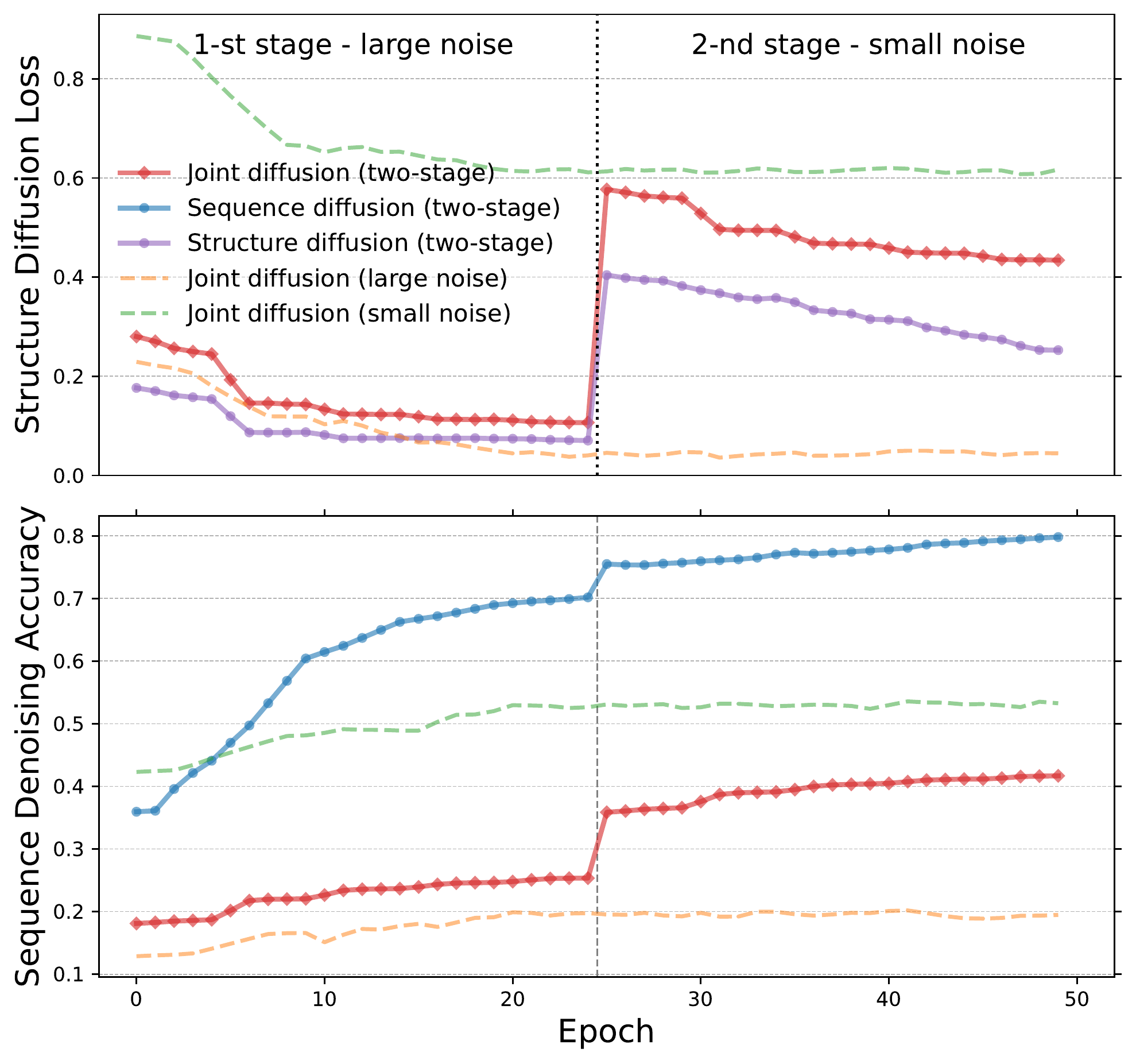}
    \vspace{-0.63cm}
    \caption{Structure diffusion loss and sequence denoising accuracy of pre-training with different noise scheduling strategies. }
    \vspace{-1em}
    \label{fig:schedule}
\end{minipage}
\end{wrapfigure}

Previous studies on scheduled denoising autoencoders on images~\citep{geras2014scheduled} have shown that large noise levels encourage the learning of coarse-grained features, while small noise levels require the model to learn fine-grained features. 
We observe a similar phenomenon in structure diffusion, as depicted in Fig.~\ref{fig:schedule}. 
The diffusion loss with large noise at a fixed scale (orange) is smaller than that with small noise at a fixed scale (green), indicating the higher difficulty of structure diffusion with small noises. 
Interestingly, the opposite is observed in sequence diffusion, where the denoising accuracy with small noise (green) is higher than that with large noise (orange). 
This can be attributed to the joint diffusion effects on protein sequences and structures. 
The addition of large noise during joint diffusion significantly disrupts protein structures, making it more challenging to infer the correct protein sequences. 
For validation, we compare the loss of sequence-only diffusion pre-training (blue) with joint diffusion pre-training (red) in Fig.~\ref{fig:schedule}. 
Sequence diffusion achieves higher denoising accuracy than joint diffusion when using uncorrupted structures, supporting our hypothesis.

Unlike recent denoising pre-training methods that rely on a fixed noise scale as a hyperparameter~\citep{zaidi2023pretraining}, {\baseline} incorporates a denoising objective at various noise levels to capture both coarse- and fine-grained features and consider the joint diffusion effect explained earlier. 
Both granularities of features are crucial for downstream tasks, as they capture both small modifications for assessing protein structure quality and large changes leading to structural instability.
In our implementation, we perform diffusion pre-training using $T=100$ noise levels (time steps). 
Following the intuition of learning coarse-grained features before fine-grained ones in curriculum learning~\citep{Bengio2009CurriculumL}, we divide the pre-training process into two stages: the first stage focuses on large noise levels ($t=10,...,100$), while the second stage targets small noise levels ($t=1,...,9$). 
As observed in Fig.~\ref{fig:schedule}, structure diffusion becomes more challenging and sequence diffusion becomes easier during the second stage, as we discussed earlier. 
However, even with this two-stage diffusion strategy, there remains a significant gap between the accuracy of sequence diffusion and joint diffusion. 
We hypothesize that employing protein encoders with larger capacities could help narrow this gap, which is left as our future works.
\section{{\method}: Siamese Diffusion Trajectory Prediction} \label{sec:method}

% \subsection{Motivation and Overview}

% %%%%%%%%%%%%%%%%%%%%%%%%%%%%%%%%%%%%%%%%%%%%%%%%%%%%%%%%%%%%

% \begin{figure}[t]
%     \centering
%     \includegraphics[width=\linewidth]{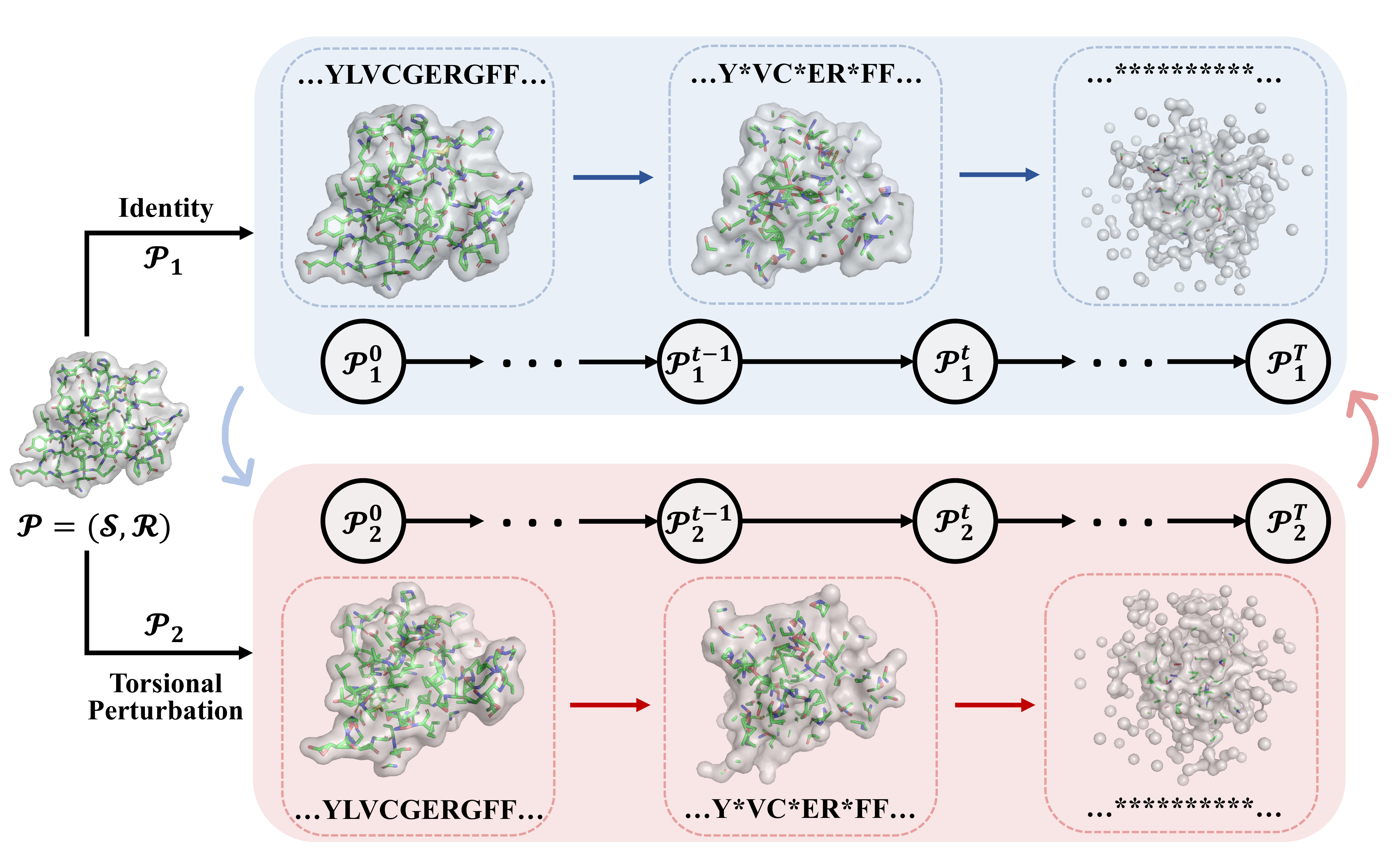}
%     \caption{High-level illustration of \method. Given a protein $\gP=(\gS,\gR)$, we first obtain two correlated conformers $\gP_1$ and $\gP_2$ via identity and torsional perturbation functions, respectively. 
%     Then, we apply a multimodal diffusion process on the starting states to construct two siamese trajectories. 
%     To maximize the mutual information between representations of these trajectories, we predict the noises on one trajectory with representations from the other.  
%     }
%     \vspace{-2.5mm}
%     \label{fig:overview}
% \end{figure}

% %%%%%%%%%%%%%%%%%%%%%%%%%%%%%%%%%%%%%%%%%%%%%%%%%%%%%%%%%%%%

Through diffusion models on both protein sequences and structures, the pre-training approach proposed in Sec.~\ref{sec:diffusion} tries to make representations capture (1) the atom- and residue-level spatial interactions and (2) the statistical dependencies of residue types within a single protein.
Nevertheless, no constraints or supervision have been added for modeling relations between different protein structures, especially different conformers of the same protein. 
% global dependencies among different proteins, especially different conformers of the same protein.
Generated under different environmental factors, these conformers typically share the same protein sequence but different structures due to side chain rotation driving conformational variations.
These conformers' properties are highly correlated~\citep{o2010essentials}, and their representations should reflect this correlation.

% To incorporate these conformer-related information into \baseline, in this section, we introduce a new approach called Siamese Diffusion Trajectory Prediction (\textbf{\method}).
% The primary aim is to maximize the mutual information (MI) between representations of diffusion trajectories of a pair of correlated conformers generated from one protein.
% By doing so, we aim to maintain the benefits of {\baseline} while injecting conformer-related information into the representations.
% First, we propose a scheme to generate randomly simulated conformers for each protein in Sec.~\ref{sec:conformer}.
% We then generate the joint diffusion trajectories for the conformers in Sec.~\ref{sec:siam_traj}, similar to {\baseline}.
% Finally, in Sec.~\ref{sec:MI_max_siam}, we show that the MI maximization can be transformed to the problem of mutual denoising between diffusion trajectories, which shares a similar loss function with \baseline.
% The graphical illustration of this method is shown in Fig.~\ref{fig:overview}.

In this section, we introduce Siamese Diffusion Trajectory Prediction (\textbf{\method}), which incorporates conformer-related information into {\baseline} by maximizing mutual information (MI) between diffusion trajectory representations of correlated conformers. We propose a scheme to generate simulated conformers (Sec.~\ref{sec:conformer}), generate joint diffusion trajectories (Sec.~\ref{sec:siam_traj}), and transform MI maximization into mutual denoising between trajectories (Sec.~\ref{sec:MI_max_siam}), sharing similar loss with {\baseline}.

\begin{wrapfigure}{R}{0.57\textwidth}
\begin{minipage}[b]{0.55\textwidth}
    \vspace{-4.2em}
    \centering
    \includegraphics[width=\linewidth]{figures/MDMD.pdf}
    \caption{High-level illustration of \method. Mutual denoising of diffusion trajectories is performed across two correlated conformers $\gP_1$ and $\gP_2$. 
    }
    \vspace{-2em}
    \label{fig:overview}
\end{minipage}
\end{wrapfigure}

% \vspace{-0.3cm}
\subsection{Conformer Simulation Scheme}
\label{sec:conformer}
Our method begins by generating different conformers of a given protein. However, direct sampling requires an accurate characterization of the energy landscape of protein conformations, which can be difficult and time-consuming. To address this issue, we adopt a commonly used scheme for sampling randomly simulated conformers by adding torsional perturbations to the side chains~\citep{ho2009probing}.

Specifically, given the original protein $\gP=(\gS,\gR)$, we consider it as the native state $\gP_1=(\gS_1,\gR_1)$ and generate a correlated conformer $\gP_2=(\gS_2,\gR_2)$ by randomly perturbing the protein structure.
That is, we set $\gS_2$ to be the same as $\gS_1$, and $\gR_2$ is obtained by applying a perturbation function $\text{perturb}(\gR_1,\epsilon)$ to the original residue structure, where $\epsilon\in[0,2\pi)^{n_r\times 4}$ is a noise vector drawn from a wrapped normal distribution~\citep{corso2022diffdock}.
The $\text{perturb}(\cdot, \cdot)$ function  rotates the side-chain of each residue according to the sampled torsional noises.
To avoid atom clashes, we adjust the variance of the added noise and regenerate any conformers that violate physical constraints.
It should be noted that the scheme can be adapted flexibly when considering different granularities of structures.
For example, instead of considering side chain rotation on a fixed backbone, we can also consider a flexible backbone by rotating backbone angles, thereby generating approximate conformers.

Although the current torsional perturbation scheme is effective, there is potential for improvement. Future research can explore enhancements such as incorporating available rotamer libraries~\citep{shapovalov2011smoothed} and introducing backbone flexibility using empirical force fields~\citep{wang2004development}.

% \vspace{-0.3cm}
\subsection{Siamese Diffusion Trajectory Generation}
\label{sec:siam_traj}

% \vspace{-0.1cm}
To maintain the information-rich joint diffusion trajectories from \baseline, we generate trajectories for pairs of conformers, \emph{a.k.a.}, siamese trajectories.
We first sample the diffusion trajectories $\gP^{0:T}_1$ and $\gP^{0:T}_2$ for conformers $\gP_1$ and $\gP_2$, respectively, using the joint diffusion process outlined in Sec.~\ref{sec:diffusion}.
For example, starting from $\gP_1^0=\gP_1=(\gS_1,\gR_1)$, we use the joint diffusion on structures and sequences to define the diffusion process $q(\gP_1^{1:T}|\gP_1^0)=q(\gR_1^{1:T}|\gR_1^0)q(\gS_1^{1:T}|\gS_1^0)$.
We derive trajectories on structures $\gR_1^{1:T}$ using the Gaussian noise in~(\ref{eq:diffusion_struct}) and derive the sequence diffusion process $\gS^{1:T}$ using the random masking in~(\ref{eq:diffusion_seq_p}).
In this way, we define the trajectory
$\gP^{0:T}_1=\{(\gS^t_1, \gR^t_1)\}_{t=0}^T$ for $\gP_1$ and can derive the siamese trajectory $\gP_2^{0:T}=\{(\gS^t_2, \gR^t_2)\}_{t=0}^T$ similarly.

% \vspace{-0.3cm}
\subsection{Mutual Information Maximization between Representations of Siamese Trajectories}
\label{sec:MI_max_siam}
% \vspace{-0.1cm}
We aim to maximize the mutual information (MI) between representations of siamese trajectories constructed in a way that reflects the correlation between different conformers of the same protein.
Direct optimization of MI is intractable, so we instead maximize a lower bound. In App.~\ref{app:sec:proof_mi}, we show that this problem can be transformed into the minimization of a loss function for mutual denoising between two trajectories:
$\gL=\frac{1}{2}(\gL^{(2\rightarrow 1)}+\gL^{(1\rightarrow 2)})$,
where
\begin{equation}
\setlength{\abovedisplayskip}{1pt}
\setlength{\belowdisplayskip}{1pt}
    \gL^{(b\rightarrow a)} \coloneqq \E_{\gP_{a}^{0:T},\gP_{b}^{0:T}}
    \left[\begin{matrix}\sum\nolimits_{t=1}^T D_{\text{KL}}\left(q(\gP_a^{t-1}|\gP_a^t,\gP_a^0)||p(\gP_a^{t-1}|\gP_a^t,\ggP_b^{0:T})\right)\end{matrix}\right], \label{eq:elbo1}
\end{equation}
with $b\rightarrow a$ being either $2\rightarrow 1$ or $1\rightarrow 2$ and $\ggP_b^{0:T}$ being representations of the trajectory $\gP_b^{0:T}$.

The two terms share the similar formula as the ELBO loss in (\ref{eq:diff_obj}).
Take $\gL^{(2\rightarrow 1)}$ for example.
Here $q(\gP_1^{t-1}|\gP_1^t,\gP_1^0)$ is a posterior analytically tractable with our definition of each diffusion step $q(\gP_1^{t}|\gP_1^{t-1})$ in~(\ref{eq:diffusion_struct}) and~(\ref{eq:diffusion_seq_p}).
The reverse process is learnt to generate a less noisy state $\gP_1^{t-1}$ given the current state $\gP_1^t$ and representations of the siamese trajectory $\ggP_2^{0:T}$, which are extracted by the protein encoder to be pre-trained.
The parameterization of the reverse process is similar as in Sec.~\ref{sec:diff_protein}, with the representations replaced by those of $\ggP_2^{0:T}$ (see App.~\ref{app:sec:siamdiff} for details).

Our approach involves mutual prediction between two siamese trajectories, which is similar to the idea of mutual representation reconstruction in~\citep{grill2020bootstrap,chen2021exploring}.
However, since $\gP_1$ and $\gP_2$ share information about the same protein, the whole trajectory of $\gP_2$ could provide too many clues for denoising towards $\gP_1^{t-1}$, making the pre-training task trivial.
To address this issue, we parameterize $p(\gP_1^{t-1}|\gP_1^t,\ggP_2^{0:T})$ with $p_\theta(\gP_1^{t-1}|\gP_1^t,\ggP_2^t)$. For diffusion on sequences, we further guarantee that the same set of residues are masked in $\gS_1^t$ and $\gS_2^t$ to avoid leakage of ground-truth residue types across correlated trajectories.

\textbf{Final pre-training objective.}
Given the similarity between the one-side objective and the ELBO loss in (\ref{eq:diff_obj}), we can use a similar way to decompose the objective into structure and sequence losses and then derive simplified loss functions for each side.
To summarize, the ultimate training objective for our method is
\begin{equation}
\setlength{\abovedisplayskip}{1pt}
\setlength{\belowdisplayskip}{1pt}
    \gL_{\text{simple}}
    = \begin{matrix}
    \frac{1}{2}(
    \gL_{\text{simple}}^{(\gR,2\rightarrow 1)}+
    \gL_{\text{simple}}^{(\gS,2\rightarrow1)}+
    \gL_{\text{simple}}^{(\gR,1\rightarrow2)}+
    \gL_{\text{simple}}^{(\gS,1\rightarrow2)}
    )\end{matrix},
\end{equation}
where $\gL_{\text{simple}}^{(\cdot,b\rightarrow a)}$ is the loss term defined by predicting  $\gP_a^{0:T}$ from  $\gP_b^{0:T}$ (see App.~\ref{app:sec:siamdiff} for derivation).

% \vspace{-0.3cm}
\subsection{Discussion}
\label{sec:discussion}

% \vspace{-0.1cm}
Now we discuss the relationship between our method and previous works.
% The benefits of these critical designs will be empirically demonstrated by experiments in Sec.~\ref{sec:exp:ablation}.

\textbf{Advantages of joint denoising.}
Compared with previous diffusion models focusing on either protein sequences~\citep{yang2022masked} or structures~\citep{Guo2022SelfSupervisedPF} 
or cross-modal contrastive learning~\citep{chen2023hotprotein,you2022cross}, in this work, we perform joint diffusion on both modalities.
Note that given a sequence $\gS$ and a structure $\gR$ that exist in the nature with high probability, the sequence-structure tuple $\gP=(\gS,\gR)$ may not be a valid state of this protein.
Consequently, instead of modeling the marginal or conditional distribution, we model the joint distribution of protein sequences and structures. 
% which can be achieved by multimodal diffusion models.

\textbf{Connection with diffusion models.}
% Diffusion models have achieved outstanding performance on image and text generation tasks~\citep{dhariwal2021diffusion,li2022diffusion} and recently been applied on unsupervised representation learning~\citep{abstreiter2021diffusion}.
% % Diffusion models employ a denoising objective at different noise levels, the effectiveness of which has been explored in earlier works of scheduled denoising autoencoders. 
% % However, these existing works only denoise a protein sampled from the diffusion trajectory from its native structure and sequence to random distribution, which adds no explicit supervision on representations of different conformers.
% % In contrast, our method incorporates the idea of mutual prediction of two siamese diffusion trajectories so as to capture the correlation between different conformers of one protein, which regularizes the manifold underlying protein structures.
% While previous works explored the effectiveness of denoising objectives~\citep{geras2014scheduled,chandra2014adaptive}, they only denoise proteins by transforming them to random distributions, lacking explicit supervision for different conformers. In contrast, our method incorporates mutual prediction between siamese diffusion trajectories to capture the correlation between conformers and regularize the underlying protein structure manifold.
Diffusion models excel in image and text generation~\citep{dhariwal2021diffusion,li2022diffusion} and have been applied to unsupervised representation learning~\citep{abstreiter2021diffusion}. 
Previous works explored denoising objectives~\citep{geras2014scheduled,chandra2014adaptive} but lacked explicit supervision for different conformers, while our method incorporates mutual prediction between siamese diffusion trajectories to capture conformer correlation and regularize protein structure manifold.

\textbf{Difference with denoising distance matching.}
% \citet{shi2021learning} first proposes the algorithm of denoising distance matching for molecule conformation generation with invariant networks and \citet{wu2021ebm} then adapts it on protein generation.
While previous works rely on perturbing distance matrices~\citep{liu2022molecular,Guo2022SelfSupervisedPF}, which can violate the triangular inequality and produce negative values, our approach directly adds noise at atom coordinates, as demonstrated in~\citet{xu2022geodiff}. 
This distinction allows us to address the limitations associated with denoising distance matching algorithms used in molecule and protein generation and pre-training.

\textbf{Comparison with other deep generative models.}
Self-supervised learning essentially learns an Energy-Based Model (EBM) for modeling data distribution~\citep{lecun2021ssl}, making VAE~\citep{kingma2013auto}, GAN~\citep{goodfellow2014generative}, and normalizing flow~\citep{rezende2015variational} applicable for pre-training. 
However, these models limit flexibility or fail to acquire high sampling quality and diversity compared to diffusion models~\citep{luo2022antigen}.
Therefore, we focus on using diffusion models for pre-training and leave other generative models for future work.

%%%%%%%%%%%%%%%%%%%%%%%%%%%%%%%%%%%%%%%%%%%%%%%%%%%%%%%%%%%%
% \vspace{-0.2cm}
\section{Experiments} \label{sec:exp}

%%%%%%%%%%%%%%%%%%%%%%%%%%%%%%%%%%%%%%%%%%%%%%%%%%%%%%%%%%%%

\begin{table*}[!t]
    \centering
    \caption{Atom-level results on Atom3D tasks.
    % Accuracy is abbreviated as Acc.
    }
    \vspace{-2.5mm}
    \label{tab:atom_result}
    \begin{adjustbox}{max width=\linewidth}
        \begin{tabular}{llcccccccccc}
            \toprule
            & \multirow{2}{*}{\bf{Method}}
            &
            \bf{PIP} & &
            \bf{MSP} & &
            \bf{RES} & & 
            \multicolumn{2}{c}{\bf{PSR}}& &\multirow{2}{*}{\makecell[c]{\bf{Mean} \\ \bf{Rank}}} \\
            \cmidrule{3-3}
            \cmidrule{5-5}
            \cmidrule{7-7}
            \cmidrule{9-10}
            & & AUROC & & AUROC & & Accuracy & &  Global $\rho$ & Mean $\rho$ & &\\
            \midrule
            & GearNet-Edge
            & 0.868$\pm$0.002 & & 0.633$\pm$0.067 && 0.441$\pm$0.001 && 0.782$\pm$0.021 & 0.488 $\pm$0.012 && 7.6\\
            \midrule
            \multirow{8}{*}{\rotatebox{90}{\bf{w/ pre-training}}$\;$}
            & Denoising Score Matching & 0.877$\pm$0.002 && 0.629$\pm$0.040 && 0.448$\pm$0.001 && 0.813$\pm$0.003 & 0.518$\pm$0.020 && 5.2\\
            & Residue Type Prediction
            & 0.879$\pm$0.004 && 0.620$\pm$0.027 && 0.449$\pm$0.001 && {0.826$\pm$0.020}& 0.518$\pm$0.018 && 4.4\\
            & Distance Prediction & 0.872$\pm$0.001 && 0.677$\pm$0.020 && 0.422$\pm$0.001 && \bf{0.840$\pm$0.020} & 0.522$\pm$0.004 && 4.0\\
            & Angle Prediction & 0.878$\pm$0.001 && 0.642$\pm$0.013 && 0.419$\pm$0.001 && 0.813$\pm$0.007 &0.503$\pm$0.012 && 6.2\\
            & Dihedral Prediction & 0.878$\pm$0.004 && 0.591$\pm$0.008 && 0.414$\pm$0.001&& 0.821$\pm$0.002 & 0.497$\pm$0.004 && 6.8\\
            & Multiview Contrast & 0.871$\pm$0.003 && 0.646$\pm$0.006 &&0.368$\pm$0.001 && 0.805$\pm$0.005 & 0.502$\pm$0.009 && 7.2\\
            \cmidrule{2-12}
            & \textbf{\baseline} & \underline{0.880$\pm$0.005} & & \underline{0.680$\pm$0.018} && \underline{0.452$\pm$0.001} && 0.821$\pm$0.007 & \underline{0.533$\pm$0.006} && \underline{2.4}\\
            & \textbf{\method} & \bf{0.884$\pm$0.003} & & \bf{0.698$\pm$0.020} && \bf{0.460$\pm$0.001} && \underline{0.829$\pm$0.012} & \bf{0.546$\pm$0.018} && \bf{1.2}\\
            \bottomrule
        \end{tabular}
    \end{adjustbox}
\end{table*}

%%%%%%%%%%%%%%%%%%%%%%%%%%%%%%%%%%%%%%%%%%%%%%%%%%%%%%%%%%%%

%%%%%%%%%%%%%%%%%%%%%%%%%%%%%%%%%%%%%%%%%%%%%%%%%%%%%%%%%%%%

\begin{table*}[!t]
    \centering
    \caption{Residue-level results on EC and Atom3D tasks.
    }
    \vspace{-2.5mm}
    \label{tab:residue_result}
    \begin{adjustbox}{max width=\linewidth}
        \begin{tabular}{llccccccccc}
            \toprule
            & \multirow{2}{*}{\bf{Method}}
            &
            \multicolumn{2}{c}{\bf{EC}} & &
            \bf{MSP} & & \multicolumn{2}{c}{\bf{PSR}} &
            & \multirow{2}{*}{\makecell[c]{\bf{Mean} \\ \bf{Rank}}} \\
            \cmidrule{3-4}
            \cmidrule{6-6}
            \cmidrule{8-9}
            & & AUPR & F\textsubscript{max} & & AUROC & &  Global $\rho$ & Mean $\rho$ & & \\
            \midrule
            & \model-Edge
            & 0.837$\pm$0.002 & 0.811$\pm$0.001 &  & 0.644$\pm$0.023 & & 0.763$\pm$0.012 & 0.373$\pm$0.021 &  & 7.8 \\
            \midrule
            \multirow{8}{*}{\rotatebox{90}{\bf{w/ pre-training}}$\;$} 
            & Denoising Score Matching & 0.859$\pm$0.003 & 0.840$\pm$0.001 & & 0.645$\pm$0.028 &   & 0.795$\pm$0.027 & 0.429$\pm$0.017 &  & 5.0 \\
            & Residue Type Prediction
            & 0.851$\pm$0.002 & 0.826$\pm$0.005 & & 0.636$\pm$0.003 &  & \underline{0.828$\pm$0.005} & 0.480$\pm$0.031 & & 5.4 \\
            & Distance Prediction & 0.858$\pm$0.003 & 0.836$\pm$0.001 & & 0.623$\pm$0.007 &  & 0.796$\pm$0.017 & 0.416$\pm$0.021 & & 6.4 \\
            & Angle Prediction & 0.873$\pm$0.003 & \underline{0.849$\pm$0.001} & & 0.631$\pm$0.041 &  & 0.802$\pm$0.015 & 0.446$\pm$0.009 & & 4.2 \\
            & Dihedral Prediction & 0.858$\pm$0.001 & 0.840$\pm$0.001 & & 0.568$\pm$0.022 &  & 0.732$\pm$0.021 & 0.398$\pm$0.022 &  & 7.2 \\
            & Multiview Contrast & \underline{0.875$\pm$0.003} & \bf{0.857$\pm$0.003} & & \bf{0.713$\pm$0.036} &  & 0.752$\pm$0.012 & 0.388$\pm$0.015 &  & 4.0 \\
            \cmidrule{2-11}
            & \textbf{\baseline} & 0.864$\pm$0.002 & 0.844$\pm$0.001 &  & 0.673$\pm$0.042 &  & 0.815$\pm$0.008 & \underline{0.505$\pm$0.007} &  & \underline{3.2} \\
            & \textbf{\method} & \bf{0.878$\pm$0.003} & \bf{0.857$\pm$0.003} &  & \underline{0.700$\pm$0.043} &  & \bf{0.856$\pm$0.007} & \bf{0.521$\pm$0.016} &  &  \bf{1.2}\\
            \bottomrule
        \end{tabular}
    \end{adjustbox}
\end{table*}

%%%%%%%%%%%%%%%%%%%%%%%%%%%%%%%%%%%%%%%%%%%%%%%%%%%%%%%%%%%%

% To comprehensively evaluate different pre-training algorithms based on protein structures, we select a set of diverse tasks that cover function prediction, structure ranking, mutation effect prediction and local structural role modeling. 
% We conduct experiments on both residue and atom levels to prove the effectiveness of our method.
% on coarse- and fine-grained structures.

% \vspace{-1.5mm}
\subsection{Experimental Setups} \label{sec:exp:setup}

\textbf{Pre-training datasets.} Following \citet{zhang2022protein}, we pre-train our models with the AlphaFold protein structure database v1~\citep{jumper2021highly,varadi2021alphafold}, including 365K proteome-wide predicted structures. %and 440K Swiss-Prot~\citep{uniprot2021uniprot} predicted structures. 
% In this way, we compare our pre-training algorithms with the ones in \citet{zhang2022protein} under a fair setting. 

\textbf{Downstream benchmark tasks.}
In our evaluation, we assess EC prediction task~\citep{gligorijevic2021structure} for catalysis behavior of proteins and four ATOM3D tasks~\citep{townshend2020atom3d}. 
The EC task involves 538 binary classification problems for Enzyme Commission (EC) numbers. We use dataset splits from \citet{gligorijevic2021structure} with a 95\% sequence identity cutoff. 
The ATOM3D tasks include Protein Interface Prediction (PIP), Mutation Stability Prediction (MSP), Residue Identity (RES), and Protein Structure Ranking (PSR) with different dataset splits based on sequence identity or competition year. Details are in App.~\ref{supp:sec:setup}.

\textbf{Baseline methods.} 
% We evaluate our method on both atom- and residue-level structures with GearNet-Edge~\citep{zhang2022protein} as the backbone model.
% GearNet-Edge models protein structures with different types of edges and edge-type-specific convolutions, which is further enhanced by message passing between edges.
% Based on the encoder, we compare the proposed methods with previous protein structural pre-training algorithms including multiview contrastive learning~\citep{zhang2022protein}, denoising score matching~\citep{Guo2022SelfSupervisedPF} and four self-prediction methods~\citep{zhang2022protein}, \emph{i.e.}, residue type, distance, angle and dihedral prediction.
% Since PIP and RES datasets are processed specifically for atom-level models, we only consider EC, MSP and PSR as residue-level tasks.
% Besides, we discard EC for atom-level evaluation, because most proteins in the dataset only contain backbone atoms when downloaded from PDB.
In our evaluation, we utilize GearNet-Edge as the underlying model for both atom- and residue-level structures. GearNet-Edge incorporates various types of edges and edge-type-specific convolutions, along with message passing between edges, to model protein structures effectively. We compare our proposed methods with several previous protein structural pre-training algorithms, including multiview contrastive learning~\citep{zhang2022protein}, denoising score matching~\citep{Guo2022SelfSupervisedPF}, and four self-prediction methods (residue type, distance, angle, and dihedral prediction)~\citep{zhang2022protein}. For residue-level tasks, we include EC, MSP, and PSR in our evaluation, while PIP and RES tasks are specifically designed for atom-level models. Besides, we exclude EC from the atom-level evaluation due to the limited presence of side-chain atoms in the downloaded PDB dataset.

\textbf{Training and evaluation.} 
% For fair comparison, we pre-train our model for 50 epochs on the AlphaFold protein structure database, following \citet{zhang2022protein}. For downstream evaluation, we fine-tune the pre-trained models for 50 epochs on EC, MSP and PSR. 
% Due to the large size of the RES and PIP dataset, we set the time limit as 24 hours and thus only fine-tine each model for 10 epochs.
% We report the mean and standard deviation of each experimental result on seeds 0, 1 and 2. For EC prediction, we employ F\textsubscript{max} and AUPR as evaluation metrics, following the original benchmark~\citep{gligorijevic2021structure}. 
% We use AUROC to measure the binary classification performance of PIP and MSP.
% For PSR prediction, we utilize the global and mean Spearman's $\rho$ to assess the ranking performance. 
% The micro-averaged accuracy serves as the evaluation metric of RES. 
% Detailed definitions of F\textsubscript{max}, global and mean Spearman's $\rho$ are stated in App.~\ref{supp:sec:setup}.
We pre-train our model for 50 epochs on the AlphaFold protein structure database following~\citet{zhang2022protein} and fine-tune it for 50 epochs on EC, MSP, and PSR. However, due to time constraints, we only fine-tune the models for 10 epochs on the RES and PIP datasets. Results are reported as mean and standard deviation across three seeds (0, 1, and 2). Evaluation metrics include F\textsubscript{max} and AUPR for EC, AUROC for PIP and MSP, Spearman's $\rho$ for PSR, and micro-averaged accuracy for RES.
% For EC prediction, we employ F\textsubscript{max} as the evaluation metric, following the standard of the Critical Assessment of Functional Annotation (CAFA)~\cite{zhou2019cafa} challenge. For PSR prediction, we utilize Spearman's $\rho$ to assess the ranking performance of models. The MSP and PIP tasks evaluate the binary classification performance with AUROC, and the micro-averaged accuracy serves as the evaluation metric of the RES task. 
More details about experimental setup can be found in App.~\ref{supp:sec:setup}.

\subsection{Experimental Results} \label{sec:exp:results}

Tables~\ref{tab:atom_result} and~\ref{tab:residue_result} provide a comprehensive overview of the results obtained by GearNet-Edge on both atom- and residue-level benchmark tasks. The tables clearly demonstrate that both {\baseline} and {\method} exhibit significant improvements over GearNet-Edge without pre-training on both levels, underscoring the effectiveness of our pre-training methods.

An interesting observation from the tables is that previous pre-training methods tend to excel in specific tasks while showing limitations in others. For instance, Multiview Contrast, designed for capturing similar functional motifs~\citep{zhang2022protein}, struggles with structural intricacies and local atomic interactions, resulting in lower performance on tasks like Protein Interface Prediction (PIP), Protein Structure Ranking (PSR), and Residue Identity (RES).
Self-prediction methods excel at capturing structural details or residue type dependencies but show limitations in function prediction tasks, such as Enzyme Commission (EC) number prediction and Mutation Stability Prediction (MSP), and do not consistently improve performance on both atom and residue levels.

In contrast, our {\baseline} approach achieves top-3 performance in nearly all considered tasks, showcasing its versatility and effectiveness across different evaluation criteria. Moreover, {\method} surpasses all other pre-training methods, achieving the best results in 6 out of 7 tasks, establishing it as the state-of-the-art pre-training approach.
These results provide compelling evidence that our joint diffusion pre-training strategy successfully captures the intricate interactions between different proteins (PIP), captures local structural details (RES) and global structural characteristics (PSR), and extracts informative features crucial for accurate function prediction (EC) across various tasks.

%%%%%%%%%%%%%%%%%%%%%%%%%%%%%%%%%%%%%%%%%%%%%%%%%%%%%%%%%%%%
% \vspace{-0.2cm}
\subsection{Ablation Study} \label{sec:exp:ablation}
% \vspace{-0.1cm}
%%%%%%%%%%%%%%%%%%%%%%%%%%%%%%%%%%%%%%%%%%%%%%%%%%%%%%%%%%%%
\begin{wraptable}{R}{0.6\textwidth}
\begin{minipage}[b]{0.6\textwidth}
    \vspace{-0.5cm}
    \centering
    \setlength\abovecaptionskip{0pt}
    \setlength\belowcaptionskip{0pt}    
    \caption{Ablation study on atom-level Atom3D tasks.}
    \label{tab:ablation}
    \begin{adjustbox}{max width=\linewidth}
        \begin{tabular}{lccccccc}
            \toprule
            \multirow{2}{*}{\bf{Method}}
            &
            \bf{PIP} & &
            \bf{MSP} & &
            \bf{RES} & & 
            \bf{PSR}\\
            \cmidrule{2-2}
            \cmidrule{4-4}
            \cmidrule{6-6}
            \cmidrule{8-8}
            & AUROC & & AUROC & & Accuracy & &  Global $\rho$\\
            \midrule
            GearNet-Edge
            & 0.868$\pm$0.002 & & 0.633$\pm$0.067 && 0.441$\pm$0.001 && 0.782$\pm$0.021 \\
            \midrule
            \textbf{\method} & \bf{0.884$\pm$0.003} & & \bf{0.698$\pm$0.020} && \bf{0.460$\pm$0.001} && \bf{0.829$\pm$0.008} \\
            \cmidrule{1-8}
            \emph{w/o seq. diff.} & 0.873$\pm$0.004 && \underline{0.695$\pm$0.002} && 0.443$\pm$0.001 && 0.803$\pm$0.010 \\
            \emph{w/o struct. diff.} & 0.878$\pm$0.003  & & 0.652$\pm$0.021 && \underline{0.456$\pm$0.001} && 0.805$\pm$0.005\\
            \emph{w/o MI max.} & \underline{0.880$\pm$0.005} & & 0.680$\pm$0.018 && 0.452$\pm$0.001 && 0.821$\pm$0.007 \\
            \emph{w/ small noise} & 0.875$\pm$0.002 && 0.646$\pm$0.031 && 0.444$\pm$0.001 && \underline{0.828$\pm$0.005}\\
            \emph{w/ large noise} & 0.867$\pm$0.003 && 0.683$\pm$0.020 && 0.443$\pm$0.001 && 0.819$\pm$0.011\\
            \bottomrule
        \end{tabular}
    \end{adjustbox}
    \vspace{-0.5cm}
\end{minipage}
\end{wraptable}
%%%%%%%%%%%%%%%%%%%%%%%%%%%%%%%%%%%%%%%%%%%%%%%%%%%%%%%%%%%%

To analyze the effect of different components of {\method}, we perform ablation study on atom-level tasks and present results in Table~\ref{tab:ablation}.
We first examine two degenerate settings of joint diffusion, \emph{i.e.}, "\emph{w/o} sequence diffusion" and "\emph{w/o} structure diffusion".
These settings lead to a deterioration in performance across all benchmark tasks, highlighting the importance of both sequence diffusion for residue type identification in RES and structure diffusion for capturing the structural stability of mutation effects in MSP.
Next, we compare {\method} with {\baseline}, which lacks mutual information maximization between correlated conformers.
The consistent improvements observed across all tasks indicate the robustness and effectiveness of our proposed mutual information maximization scheme.

Besides, we compare our method to baselines with fixed small ($T=1$) and large ($T=100$) noise levels to demonstrate the benefits of multi-scale denoising in diffusion pre-training. Interestingly, we observe that denoising with large noise enhances performance on MSP by capturing significant structural changes that lead to structural instability, while denoising with small noise improves performance in PSR by capturing fine-grained details for protein structure assessment. By incorporating multi-scale noise, we eliminate the need for manual tuning of the noise level as a hyperparameter and leverage the advantages of both large- and small-scale noise, as evidenced in the table.

%%%%%%%%%%%%%%%%%%%%%%%%%%%%%%%%%%%%%%%%%%%%%%%%%%%%%%%%%%%%

% \vspace{-0.2cm}
\subsection{Combine with Protein Language Models}
% \vspace{-0.1cm}
%%%%%%%%%%%%%%%%%%%%%%%%%%%%%%%%%%%%%%%%%%%%%%%%%%%%%%%%%%%%

\begin{wraptable}{R}{0.6\textwidth}
\begin{minipage}[b]{0.6\textwidth}
    \vspace{-0.5cm}
    \centering
    \setlength\abovecaptionskip{0pt}
    \setlength\belowcaptionskip{0pt}    
    \caption{ESM2-650M-GearNet on residue-level tasks.}
    \label{tab:esm_gearnet}
    \begin{adjustbox}{max width=\linewidth}
        \begin{tabular}{lccccccc}
            \toprule
            \multirow{2}{*}{\bf{Method}}
            &
            \multicolumn{2}{c}{\bf{EC}} & &
            \bf{MSP} & &
            \multicolumn{2}{c}{\bf{PSR}}\\
            \cmidrule{2-3}
            \cmidrule{5-5}
            \cmidrule{7-8}
            & AUPR & F\textsubscript{max} && AUROC && Global $\rho$ & Mean $\rho$\\
            \midrule
            GearNet-Edge
            & 0.837$\pm$0.002 & 0.811$\pm$0.001 && 0.664$\pm$0.023 && 0.764$\pm$0.012 & 0.373$\pm$0.021 \\
            \textbf{w/ \method}
            & 0.878$\pm$0.003 & 0.857$\pm$0.003 && \bf{0.700$\pm$0.043} && \underline{0.856$\pm$0.007} & 0.521$\pm$0.016 \\
            \midrule
            ESM-GearNet
            &  \underline{0.904$\pm$0.002} & \underline{0.890$\pm$0.002} && 0.685$\pm$0.027 && 0.829$\pm$0.013 & \underline{0.595$\pm$0.010}\\
            \textbf{w/ \method}
            & \bf{0.907$\pm$0.001} & \bf{0.897$\pm$0.001} && \underline{0.692$\pm$0.010} && \bf{0.863$\pm$0.009} & \bf{0.656$\pm$0.011}\\
            \bottomrule
        \end{tabular}
    \end{adjustbox}
    \vspace{-0.5cm}
\end{minipage}
\end{wraptable}
%%%%%%%%%%%%%%%%%%%%%%%%%%%%%%%%%%%%%%%%%%%%%%%%%%%%%%%%%%%%

Protein language models (PLMs) have recently become a standard method for extracting representations from protein sequences, such as ESM~\citep{lin2022language}. 
However, these methods are unable to directly handle structure-related tasks in Atom3D without using protein structures as input. 
A recent solution addresses this by feeding residue representations outputted by ESM into the protein structure encoder  GearNet~\citep{zhang2023enhancing}. 
To showcase the potential of {\method} on PLM-based encoders, we pre-trained the ESM-GearNet encoder using {\method} and evaluated its performance on residue-level tasks. 
Considering the model capacity and computational budget, we selected ESM-2-650M as the base PLM. 
The results in Table~\ref{tab:esm_gearnet} demonstrate the performance improvements obtained by introducing the PLM component in ESM-GearNet. 
Furthermore, after pre-training with {\method}, ESM-GearNet achieves even better performance on all tasks, especially on PSR where ESM-only representations are not indicative for structure ranking.
This highlights the benefits of our method for PLM-based encoders.

%%%%%%%%%%%%%%%%%%%%%%%%%%%%%%%%%%%%%%%%%%%%%%%%%%%%%%%%%%%%

In addition, we provide experimental results about pre-training datasets in App.~\ref{app:sec:pretrain_dataset}, multimodal baselines in App.~\ref{app:sec:multimodal_baseline}, different diffusion strategies in App.~\ref{app:sec:diffpret}, and different backbone models in App.~\ref{app:sec:gvp}.
%%%%%%%%%%%%%%%%%%%%%%%%%%%%%%%%%%%%%%%%%%%%%%%%%%%%%%%%%%%%
% \vspace{-0.3cm}
\section{Conclusions} \label{sec:conclusion}
% \vspace{-0.2cm}

In this work, we propose the {\baseline} approach to pre-train a protein encoder by sequence-structure joint diffusion modeling, which captures the inter-atomic interactions within structure and the residue type dependencies along sequence. We further propose the {\method} method to enhance {\baseline} by additionally modeling the correlation between different conformers of one protein. Extensive experiments on diverse types of tasks and on both atom- and residue-level structures verify the competitive performance of {\baseline} and the superior performance of {\method}. 
% The current torsional perturbation scheme in {\method} is simple yet effective, while it less explores the physical rules hidden in side chain conformations. Therefore, our future works will further enhance {\method} with physics-inspired rotamer libraries~\citep{shapovalov2011smoothed} and physics-based force fields~\citep{wang2004development}. 

% In future works, we will enhance the proposed methods with a dedicated multimodal encoder that can better model the protein structure and sequence in a joint fashion, and we will also explore how to incorporate the large-scale protein sequence corpus into the pre-training process. 

%%%%%%%%%%%%%%%%%%%%%%%%%%%%%%%%%%%%%%%%%%%%%%%%%%%%%%%%%%%%

\section*{Acknowledgments}

The authors would like to thank Meng Qu, Zhaocheng Zhu, Shengchao Liu, Chence Shi, Jiarui Lu, Huiyu Cai, Xinyu Yuan and Bozitao Zhong for their helpful discussions and comments.

This project is supported by AIHN IBM-MILA partnership program, the Natural Sciences and Engineering Research Council (NSERC) Discovery Grant, the Canada CIFAR AI Chair Program, collaboration grants between Microsoft Research and Mila, Samsung Electronics Co., Ltd., Amazon Faculty Research Award, Tencent AI Lab Rhino-Bird Gift Fund, a NRC Collaborative R\&D Project (AI4D-CORE-06) as well as the IVADO Fundamental Research Project grant PRF-2019-3583139727.

%%%%%%%%%%%%%%%%%%%%%%%%%%%%%%%%%%%%%%%%%%%%%%%%%%%%%%%%%%%%

% \newpage
% \input{sections/07_reproduce}
\bibliography{reference}
\bibliographystyle{plainnat}

\newpage
\appendix
\onecolumn

%%%%%%%%%%%%%%%%%%%%%%%%%%%%%%%%%%%%%%%%%%%%%%%%%%%%%%%%%%%%

\section{More Related Works} \label{supp:sec:rela}

\textbf{Protein Structure Encoder.} The community witnessed a surge of research interests in learning informative protein structure representations using structure-based encoders. The encoders are designed to capture protein structural information on different granularity, including residue-level structures~\citep{gligorijevic2021structure,zhang2022protein,xu2022eurnet}, atom-level structures~\citep{hermosilla2020intrinsic,jing2021equivariant,wang2022learning} and protein surfaces~\citep{gainza2020deciphering,sverrisson2021fast,somnath2021multi}. In this work, we focus on pre-training a typical residue-level structure encoder, \emph{i.e.}, GearNet-Edge~\citep{zhang2022protein}, and a typical atom-level structure encoder, \emph{i.e.}, GVP~\citep{jing2021equivariant}. 

\textbf{Mutual Information (MI) Estimation and Maximization.} MI can measure both the linear and non-linear dependency between random variables. Some previous works~\citep{belghazi2018mutual,hjelm2018learning} try to use neural networks to estimate the lower bound of MI, including Donsker-Varadhan representation~\citep{donsker1983asymptotic}, Jensen-Shannon divergence~\citep{fuglede2004jensen} and Noise-Contrastive Estimation (NCE)~\citep{gutmann2010noise,gutmann2012noise}. 
The optimization with InfoNCE loss~\citep{oord2018representation} maximizes a lower bound of MI and is broadly shown to be a superior representation learning strategy~\citep{chen2020simple,hassani2020contrastive,xu2021self,liu2021pre,zhang2022protein}. In this work, we adopt the MI lower bound proposed by \citet{liu2022molecular} with two conditional log-likelihoods, and we formulate the learning objective by mutually denoising the multimodal diffusion processes of two correlated proteins. 

\subsection{Broader Impacts and Limitations}
\label{sec:app_impact}
The main objective of this research project is to enhance protein representations by utilizing joint pre-training using a vast collection of unlabeled protein structures. Unlike traditional unimodel pre-training methods, our approach takes advantage of both sequential and structural information, resulting in superior representations. This advantage allows for more comprehensive analysis of protein research and holds potential benefits for various real-world applications, including protein function prediction and sequence design.

\paragraph{Limitations.}
In this paper, we limit our pre-training dataset to less than 1M protein structures. However, considering the vast coverage of the AlphaFold Protein Structure Database, which includes over 200 million proteins, it becomes feasible to train more advanced and extensive protein encoders on larger datasets in the future. Furthermore, an avenue for future exploration is the incorporation of conformer-related information during pre-training and the development of improved noise schedules for multi-scale denoising pre-training. It is important to acknowledge that powerful pretrained models can potentially be misused for harmful purposes, such as the design of dangerous drugs. We anticipate that future studies will address and mitigate these concerns.

%%%%%%%%%%%%%%%%%%%%%%%%%%%%%%%%%%%%%%%%%%%%%%%%%%%%%%%%%%%%

\section{Details of \method}
\label{app:sec:siamdiff}

In this section, we discuss details of our {\method} method.
We first describe the parameterization of the generation process $p_\theta(\gP_1^{t-1}|\gP_1^t,\ggP_2^t)$ in Sec.~\ref{app:sec:generation}, derive the pre-training objective in Sec.~\ref{app:sec:objective}, and discuss some modifications when applied on residue-level models in Sec.~\ref{app:sec:residue}.

\subsection{Parameterization of Generation Process}
\label{app:sec:generation}

Remember that we use $\ggP_1^{0:T}$ and $\ggP_2^{0:T}$ to denote the representation of the siamese trajectories $\gP_1^{0:T}$ and $\gP_2^{0:T}$, respectively.
Different from the generation process in traditional diffusion models, the parameterization of $p_\theta(\gP_1^{t-1}|\gP_1^t,\ggP_2^t)$ should inject information from $\gP_2^t$.
Therefore, we use the extracted residue and atom representations (denoted as $\va_2^t$ and $\vh_2^t$) of $\gP_2^t$ for this denoising step.
Given the conditional independence in (\ref{eq:indep_p}), this generation process can be decomposed into that on protein structures and sequences similarly in  Sec.~\ref{sec:diff_loss}.

% \begin{wrapfigure}{R}{0.45\textwidth}
%     \begin{minipage}[b]{0.45\textwidth}
%     \vspace{-2.4em}
    \begin{algorithm}[H]
    \footnotesize    \captionsetup{font=footnotesize}\caption{{\method} Pre-Training}
    \textbf{Input:} training dataset $\gD$, learning rate $\alpha$\\
    \textbf{Output:} trained encoder $\phi_\theta$ 
    \begin{algorithmic}[1]
    \For{$(\gS,\gR)$ in $\gD$}
        \State{$(\gS_1,\gR_1)\gets (\gS,\gR)$;}
        \State{$(\gS_2,\gR_2)\gets (\gS,\text{perturb}(\gR,\epsilon))$;}
        \State{sample noise scale $t\in\{1...T\}$;}
        \State{$\gR_1^t\sim \gN(\sqrt{\bar{\alpha}_t}\gR_1,(1-\bar{\alpha}_t)I)$;}
        \State{$\gR_2^t\sim \gN(\sqrt{\bar{\alpha}_t}\gR_2,(1-\bar{\alpha}_t)I)$;}
        \State{$\gS_1^t,\gS_2^t\sim \text{random\_mask}(\gS,t)$}
        \State{$\vh_1\gets\phi_\theta(\gS_1^t,\gR_1^t)$;}
        \State{$\vh_2\gets\phi_\theta(\gS_2^t,\gR_2^t)$;}
        \State{$\gL^{(\gR)}=\gL(\gR_1,\vh_2)+\gL(\gR_2,\vh_1)$;}
        \State{$\gL^{(\gS)}=\gL(\gS_1,\vh_2)+\gL(\gS_2,\vh_1)$;}
        \State{$\theta\gets \theta-\alpha\nabla_{\theta}(\gL^{(\gR)}+\gL^{(\gS)})$;}
    \EndFor
    \end{algorithmic}
    \label{alg:train}
\end{algorithm}
% \vspace{-2.2em}
% \end{minipage}
% \end{wrapfigure}

\textbf{Generation process on protein structures.}
As in (\ref{eq:diffusion_struct_noise}), modeling the generation process of protein structures is to model the noise on $\gR_1^t$ and gradually decorrupt the noisy structure.
This can be parameterized with a noise prediction network $\epsilon_\theta(\gP_1^t,\ggP_2^t,t)$ that is translation-invariant and rotation-equivariant \emph{w.r.t.} $\gR_1^t$.
Besides, the noise applied on $\gR_1^t$ should not change with transformations on $\gR_2^t$, so $\epsilon_\theta$ should be SE(3)-invariant \emph{w.r.t.} $\gR_2^t$.

To achieve these goals, we build our noise prediction network with atom representations $\va_{2}^t$ (which is SE(3)-invariant \emph{w.r.t.} $\gR_2^t$) and atom coordinates $\vr_{1}^t$ (which is SE(3)-equivariant \emph{w.r.t.} $\gR_1^t$).
We define an equivariant output similarly as in \baseline.
% Each edge $(i,j)$ is encoded by its length $\|\vr_{1i}^t-\vr_{1j}^t\|_2$ and the representations of two end nodes $\va_{2i}^{t}$, $\va_{2j}^{t}$, and the encoded score $m_{i,j}$ will be further used for aggregating directional vectors.
Specifically, we have
\begin{align*}
    [\epsilon_\theta(\gP_1^t,\ggP_2^t,t)]_i =\begin{matrix} \sum\nolimits_{j\in\gN^{t}_1(i)} m_{i,j}\cdot \frac{\vr_{1i}^t-\vr_{1j}^t}{\left\|\vr_{1i}^t-\vr_{1j}^t\right\|_2}\end{matrix},\; \text{with}\;
    m_{i,j} =\begin{small} \text{MLP}(\va_{2i}^{t},\va_{2j}^{t}, \text{MLP}(\|\vr_{1i}^t-\vr_{1j}^t\|_2))
    \end{small},
\end{align*}
where $\gN^{t}_1(i)$ denotes the neighbors of the atom $i$ in the corresponding graph of $\gP_1^t$.
Note that $\epsilon_\theta(\gP_1^t,\ggP_2^t,t)$ achieves the equivariance requirement, as $m_{i,j}$ is SE(3)-invariant \emph{w.r.t.} $\gR_1^t$ and $\gR_2^t$ while $\vr_{1i}^t-\vr_{1j}^t$ is translation-invariant and rotation-equivariant \emph{w.r.t.} $\gR_1^t$.

\textbf{Generation process on protein sequences.}
As in (\ref{eq:generation_seq}), the generation process on sequences aims to predict masked residue types in $\gS_1^0$ with a predictor $\tilde{p}_\theta$.
In our setting of mutual prediction, we define the predictor based on representations of the same residues in $\gS_2^t$, which are also masked.
Hence, for each masked residue $i$ in $\gS_2^t$, we feed its representation $\vh_{2i}^t$ to an MLP and predict the type of the corresponding residue type $s^0_{1i}$ in $\gS_1^0$:
\begin{align*}
% \label{eq:generation_seq}
\tilde{p}_\theta(\gS^0_{1}|\gP_1^t,\ggP_2^t)=\begin{matrix}
    \prod\nolimits_i
    \tilde{p}_\theta(s^0_{1i}|\gP_1^t,\ggP_2^t)
    \end{matrix}
    =\begin{matrix}
    \prod\nolimits_i\text{Softmax}(s_{1i}^0|\text{MLP}(\vh_{2i}^t))
    \end{matrix},
\end{align*}
where the softmax function is applied over all residue types.

\subsection{Pre-Training Objective}
\label{app:sec:objective}

Given the defined forward and reverse process on two trajectories, we now derive the pre-training objective based on the mutual diffusion loss in (\ref{eq:elbo1}).%Prop.~\ref{prop:mi}.
We take the term $\gL^{(2\rightarrow 1)}$ for example and its counterpart can be derived in the same way.
The objective can be decomposed into a structure loss $\gL^{(\gR,2\rightarrow 1)}$ and a sequence loss $\gL^{(\gS,2\rightarrow 1)}$:
\begin{align}
    \gL^{(\gR,2\rightarrow 1)}
    \coloneqq&\E\left[\begin{matrix}\sum\nolimits_{t=1}^T D_{\text{KL}}\left(q(\gR_1^{t-1}|\gR_1^{t},\gR_1^0)||p_\theta(\gR_1^{t-1}|\gP_1^t,\ggP_2^t)\right)\end{matrix}\right],\label{app:eq:elboR}\\
    \gL^{(\gS,2\rightarrow 1)}
    \coloneqq&\E\left[\begin{matrix}\sum\nolimits_{t=1}^T D_{\text{KL}}\left(q(\gS_1^{t-1}|\gS_1^{t},\gS_1^0)||p_\theta(\gS_1^{t-1}|\gP_1^t,\ggP_2^t)\right)\end{matrix}\right].
    \label{app:eq:elboS}
\end{align}

Based on the derivation in Sec.~\ref{sec:diff_loss}, the structure loss $\gL^{(\gR,2\rightarrow 1)}$ can be simplified as 
\begin{align}
    \gL_{\text{simple}}^{(\gR,2\rightarrow 1)}
    =\begin{matrix}\sum\nolimits_{t=1}^T \E_{\epsilon\sim \gN(0,I)}\left[\|\epsilon - \epsilon_\theta(\gP_1^t,\ggP_2^t,t)\|_2^2\right]\end{matrix},
\end{align}
and the sequence loss $\gL^{(\gS,2\rightarrow 1)}$ can be simplified as 
\begin{align}
    \gL_{\text{simple}}^{(\gS,2\rightarrow 1)}
    =\begin{matrix}\sum\nolimits_{t=1}^T \sum\nolimits_i \text{CE}\left(s_{1i}^0,\tilde{p}_\theta(s_{1i}^{0}|\gP_1^t,\ggP_2^t)\right)\end{matrix}.
\end{align}
Then, the final objective in Sec.~\ref{sec:MI_max_siam} can be easily derived.

\subsection{Residue-level model}
\label{app:sec:residue}

Residue-level protein graphs can be seen as a concise version of atom graphs that enable efficient message passing between nodes and edges.
As in~\citet{zhang2022protein}, we only keep the alpha carbon atom of each residue and add sequential, radius and K-nearest neighbor edges as different types of edges. 
For {\method}, the residue-level model cannot discriminate conformers generated by rotating side chains, since we only keep CA atoms.
To solve this problem, we directly add Gaussian noises to the coordinates instead to generate approximate conformers.
Specifically, the correlated conformer $\gP_2=(\gS_2,\gR_2)$ is defined by $\gS_2=\gS_1,\gR_2=\gR_1+\epsilon$,
where $\epsilon\in\R^{n_a\times 3}$ is the noise drawn from a normal distribution.

\section{Proofs}

In this section, we provide proofs for propositions in Sec.~\ref{sec:diffusion} and Sec.~\ref{sec:method}.
Due to the similarity between the two methods, all propositions are restated for {\method}. 
{\baseline} can be seen as a special case that two siamese trajectories collapse into one.

\subsection{Proof of Proposition~\ref{prop:mi}}
\label{app:sec:proof_mi}
For notations, we use the bold symbol to denote the representation of an object and use $\mP_1^{0:T}$ and $\mP_2^{0:T}$ to denote the corresponding random variables of representations of the siamese trajectories $\ggP_1^{0:T}$ and $\ggP_2^{0:T}$.
\begin{proposition}\label{prop:mi}
With some approximations, the mutual information between representations of two siamese trajectories is lower bounded by:
\begin{align*}
    I(\mP_1^{0:T};\mP_2^{0:T})
    \ge -\frac{1}{2}(\gL^{(2\rightarrow 1)}+\gL^{(1\rightarrow 2)}) + C,
\end{align*}
where $C$ is a constant independent of our encoder and the term from trajectory $\gP_b^{0:T}$ to $\gP_a^{0:T}$ is defined as 
\begin{align*}
    \gL^{(b\rightarrow a)} \coloneqq \E_{\gP_{a}^{0:T},\gP_{b}^{0:T}}
    \left[\begin{matrix}\sum\nolimits_{t=1}^T D_{\text{KL}}\left(q(\gP_a^{t-1}|\gP_a^t,\gP_a^0)||p(\gP_a^{t-1}|\gP_a^t,\ggP_b^{0:T})\right)\end{matrix}\right],% \label{eq:elbo1}
    % \gL^{(1\rightarrow 2)} &\coloneqq \E_q\left[\sum\nolimits_{t=1}^T D_{\text{KL}}\left(q(\gP_2^{t-1}|\gP_2^t,\gP_2^0)||p(\gP_2^{t-1}|\gP_2^t,\gP_1^{0:T})\right)\right].\label{eq:elbo2}
\end{align*}
with $b\rightarrow a$ being either $2\rightarrow 1$ or $1\rightarrow 2$.
\end{proposition}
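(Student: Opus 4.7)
The plan is to mirror the ELBO-style mutual information lower bound developed by \citet{liu2022molecular} for pairs of molecular views, and adapt it to our setting where the two ``views'' are entire joint sequence-structure diffusion trajectories. First I would symmetrize, writing
\begin{equation*}
I(\mP_1^{0:T};\mP_2^{0:T}) = \tfrac{1}{2}\bigl[I(\mP_1^{0:T};\mP_2^{0:T}) + I(\mP_2^{0:T};\mP_1^{0:T})\bigr],
\end{equation*}
so that it suffices to lower-bound each directional term $I(\mP_a^{0:T};\mP_b^{0:T})$ by $-\gL^{(b\to a)}$ up to an encoder-independent constant; the final bound then follows by averaging.

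For a single direction, say $2\to 1$, I would decompose $I(\mP_1^{0:T};\mP_2^{0:T}) = H(\mP_1^{0:T}) - H(\mP_1^{0:T}\mid \mP_2^{0:T})$. The marginal entropy $H(\mP_1^{0:T})$ depends only on the data distribution and the forward process (not on the reverse-process parameters $\theta$), so it is absorbed into $C$. The conditional entropy term equals $\E\bigl[\log p(\ggP_1^{0:T}\mid \ggP_2^{0:T})\bigr]$ (up to sign); invoking the ``some approximations'' clause of the statement, I would assume the representation map is sufficiently informative on the support of the trajectory distribution so that $\log p(\ggP_1^{0:T}\mid \ggP_2^{0:T})$ can be replaced, up to an additive constant, by $\log p(\gP_1^{0:T}\mid \ggP_2^{0:T})$. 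A clean way to state this is to require that the encoder $\phi_\theta$ together with the decoder classes for structures and sequences from Sec.~\ref{sec:diff_protein} is expressive enough to realize $p(\gP_1^{0:T}\mid \ggP_2^{0:T})$ as the conditional of $\gP_1^{0:T}$ given the siamese view.

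Next I would apply the standard variational lower bound for diffusion likelihoods \citep{sohl2015deep,ho2020denoising}, now conditioned on $\ggP_2^{0:T}$. Introducing the forward posterior $q(\gP_1^{t-1}\mid \gP_1^t,\gP_1^0)$ (which is independent of the condition because the forward noising process is fixed), Jensen's inequality yields
\begin{equation*}
\log p(\gP_1^0\mid \ggP_2^{0:T}) \ge -\sum_{t=1}^T \E_q\bigl[D_{\text{KL}}\bigl(q(\gP_1^{t-1}\mid \gP_1^t,\gP_1^0)\,\|\,p(\gP_1^{t-1}\mid \gP_1^t,\ggP_2^{0:T})\bigr)\bigr] + R,
\end{equation*}
where $R$ collects the prior-matching term $-D_{\text{KL}}(q(\gP_1^T\mid\gP_1^0)\,\|\,p(\gP_1^T))$ and the $t=1$ reconstruction term, both of which are either constants of the noise schedule or merge into $C$. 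Recognizing the KL sum as exactly $\gL^{(2\to 1)}$, and repeating the argument verbatim for $1\to 2$, then averaging the two directions, gives the claimed inequality.

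The main obstacle is the approximation step linking $p(\ggP_1^{0:T}\mid \ggP_2^{0:T})$ to $p(\gP_1^{0:T}\mid \ggP_2^{0:T})$, since in general passing through the encoder can only destroy information (by the data processing inequality $I(\mP_1;\mP_2)\le I(\gP_1;\gP_2)$) and the entropy $H(\mP_1^{0:T})$ is not strictly independent of $\theta$. The cleanest remedies are either to assume the representation is a deterministic lossless function of the trajectory on the support of interest (making both $H(\mP_1^{0:T})$ and the information-loss gap match up to constants), or to explicitly track an approximation-error term that vanishes under the parameterizations of Sec.~\ref{sec:diff_protein}; I would adopt the former for presentation and defer the formalization to the appendix. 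A minor secondary point is verifying that the conditioned reverse kernel $p(\gP_1^{t-1}\mid \gP_1^t,\ggP_2^{0:T})$ is a well-defined probability density in each modality, which is immediate from the Gaussian and categorical parameterizations already introduced.
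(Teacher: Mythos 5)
Your proposal is close in spirit to the paper's argument but it organizes the bound differently, and in doing so introduces a claim that is not quite right as written. The paper works directly on the ratio form of $I(\mP_1^{0:T};\mP_2^{0:T})$ and applies an algebraic inequality: it replaces the denominator $p(\ggP_1^0)p(\ggP_2^0)$ by $\sqrt{p(\ggP_1^0)p(\ggP_2^0)}$, which is the same as dropping the term $\tfrac{1}{2}\bigl[H(\mP_1^0)+H(\mP_2^0)\bigr]$; after that the expression splits symmetrically into the two conditional log-likelihood terms $\log\bigl(p(\ggP_1^{0:T}\mid\ggP_2^{0:T})/q(\ggP_1^{1:T}\mid\ggP_1^0)\bigr)$ and its mirror. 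Crucially this step only needs those entropies to be \emph{non-negative} (true for discrete entropies, informal for differential entropy), not constant. You instead symmetrize by $I=\tfrac12(I+I)$, expand each copy as $H(\mP_a^{0:T})-H(\mP_a^{0:T}\mid\mP_b^{0:T})$, and absorb $H(\mP_a^{0:T})$ into $C$.

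The problem is your statement that $H(\mP_1^{0:T})$ ``depends only on the data distribution and the forward process (not on the reverse-process parameters $\theta$)''. That is false: $\mP_1^{0:T}$ is the representation trajectory produced by the encoder $\phi_\theta$, so $H(\mP_1^{0:T})=H(\mP_1^0)-\E\bigl[\log q(\ggP_1^{1:T}\mid\ggP_1^0)\bigr]$, and while the second piece is fixed by the forward noise schedule, $H(\mP_1^0)$ depends on $\theta$. You do flag this issue in your closing paragraph and propose assuming a lossless (effectively injective) encoder, which would indeed patch the argument; but the paper avoids needing any such assumption precisely because it only drops the marginal-entropy piece as a lower bound rather than asserting it is constant. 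If you want to keep the entropy decomposition for exposition, you should lower-bound $H(\mP_a^0)\ge 0$ instead of absorbing $H(\mP_a^{0:T})$ wholesale into $C$; this recovers the paper's inequality and removes the need for the lossless-encoder hypothesis.

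The remaining steps — replacing representations by raw trajectories as the targets to be denoised (the ``some approximations'' clause), and then decomposing the conditional log-ratio over the reverse Markov chain into a sum of per-step KL divergences — match the paper's proof of Proposition~\ref{prop:mi} essentially verbatim. One cosmetic difference: the paper manipulates the full-trajectory ratio $\E\bigl[\log\bigl(p(\gP_1^{0:T}\mid\ggP_2^{0:T})/q(\gP_1^{1:T}\mid\gP_1^0)\bigr)\bigr]$ as an identity, following \citet{xu2022geodiff}, whereas you invoke Jensen's inequality on the marginal $\log p(\gP_1^0\mid\ggP_2^{0:T})$; since the reverse process is \emph{defined} as the Markov chain being used, these coincide up to how the conditional model is declared, so this choice is stylistic rather than substantive.
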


\begin{proof}
First, the mutual information between representations of two trajectories is defined as:
\begin{align} \label{eq:def_mi}
    I(\mP_1^{0:T};\mP_2^{0:T})
    &=\begin{matrix}\E_{\ggP_1^{0:T},\ggP_2^{0:T}\sim p(\mP_1^{0:T},\mP_2^{0:T})}\left[\log \frac{p(\ggP_1^{0:T},\ggP_2^{0:T})}{p(\ggP_1^{0:T})p(\ggP_2^{0:T})}\right]\end{matrix},
\end{align}
where the joint distribution is defined as $p(\mP_1^{0:T},\mP_2^{0:T})=p(\mP_1^0,\mP_2^0)q(\mP_1^{1:T}|\mP_1^0)q(\mP_2^{1:T}|\mP_2^0)$.
Next, we can derive a lower bound with this definition:
\begin{align*}
    &I(\mP_1^{0:T};\mP_2^{0:T})
    =\E\left[\log \frac{p(\ggP_1^{0:T},\ggP_2^{0:T})}{p(\ggP_1^0)q(\ggP_1^{1:T}|\ggP_1^0)p(\ggP_2^0)q(\ggP_2^{1:T}|\ggP_2^0)}\right]\\
    \ge& \E\left[
    \log  \frac{p(\ggP_1^{0:T},\ggP_2^{0:T})}{\sqrt{p(\ggP_1^0)p(\ggP_2^0)}q(\ggP_1^{1:T}|\ggP_1^0)q(\ggP_2^{1:T}|\ggP_2^0)}
    \right] \\
    =& \frac{1}{2}\E\left[
    \log  \frac{p(\ggP_1^{0:T},\ggP_2^{0:T})^2}{p(\ggP_1^0)p(\ggP_2^0)q(\ggP_1^{1:T}|\ggP_1^0)^2q(\ggP_2^{1:T}|\ggP_2^0)^2}
    \right] \\
    =& \frac{1}{2}\E\left[
    \log\frac{p(\ggP_1^{0:T},\ggP_2^{0:T})}{p(\ggP_2^0)q(\ggP_1^{1:T}|\ggP_1^0)q(\ggP_2^{1:T}|\ggP_2^0)} +
    \log \frac{p(\ggP_1^{0:T},\ggP_2^{0:T})}{p(\ggP_1^0)q(\ggP_1^{1:T}|\ggP_1^0)q(\ggP_2^{1:T}|\ggP_2^0)}
    \right]\\
    =&
    \frac{1}{2}
    \E\left[\log \frac{p(\ggP_1^{0:T}|\ggP_2^{0:T})}{q(\ggP_1^{1:T}|\ggP_1^0)}+\log \frac{p(\ggP_2^{0:T}|\ggP_1^{0:T})}{q(\ggP_2^{1:T}|\ggP_2^0)}\right].
\end{align*}
However, since the distribution of representations are intractable to sample for optimization, we instead sample the trajectories $\gP_1^{0:T}$ and $\gP_2^{0:T}$ from our defined diffusion process, \emph{i.e.}, $p(\gP_1^{0:T},\gP_2^{0:T})=p(\gP_1^0,\gP_2^0)q(\gP_1^{1:T}|\gP_1^0)q(\gP_2^{1:T}|\gP_2^0)$.
Besides, instead of predicting representations, we use the representations from one trajectory to recover the other trajectory, which reflects more information than its representation.
With these approximations, the lower bound above can be further written as:
\begin{align*}
\begin{matrix}
    \frac{1}{2}
    \E\left[\log \frac{p(\ggP_1^{0:T}|\ggP_2^{0:T})}{q(\ggP_1^{1:T}|\ggP_1^0)}+\log \frac{p(\ggP_2^{0:T}|\ggP_1^{0:T})}{q(\ggP_2^{1:T}|\ggP_2^0)}\right]
    \approx
    \frac{1}{2}
    \E_{\gP_{1}^{0:T},\gP_{2}^{0:T}}\left[\log \frac{p(\gP_1^{0:T}|\ggP_2^{0:T})}{q(\gP_1^{1:T}|\gP_1^0)}+\log \frac{p(\gP_2^{0:T}|\ggP_1^{0:T})}{q(\gP_2^{1:T}|\gP_2^0)}\right]
\end{matrix}
\end{align*}

We now show the first term on the right hand side can be written as the loss defined in Proposition~\ref{prop:mi}.
The derivation is very similar with the proof of Proposition 3 in~\citet{xu2022geodiff}.
We include it here for completeness:
\begin{align*}
    &\E_{\gP_{1}^{0:T},\gP_2^{0:T}}\left[\log \frac{p(\gP_1^{0:T}|\ggP_2^{0:T})}{q(\gP_1^{1:T}|\gP_1^0)}\right]\\
    =&\E_{\gP_{1}^{0:T},\gP_2^{0:T}}\left[\sum_{t=1}^T\log \frac{p(\gP_1^{t-1}|\gP_1^t,\ggP_2^{0:T})}{q(\gP_1^{t}|\gP_1^{t-1})}\right]\\
    =&\E_{\gP_{1}^{0:T},\gP_2^{0:T}}\left[\log \frac{(\gP_1^0|\gP_1^1,\ggP_2^{0:T})}{q(\gP_1^{1}|\gP_1^{0})}+
    \sum_{t=2}^T\log\left( \frac{p(\gP_1^{t-1}|\gP_1^t,\ggP_2^{0:T})}{q(\gP_1^{t-1}|\gP_1^{t},\gP_1^0)}\cdot\frac{q(\gP_1^{t-1}|\gP_1^{0})}{q(\gP_1^{t}|\gP_1^{0})}\right)\right]\\
    =&\E_{\gP_{1}^{0:T},\gP_2^{0:T}}\left[-\log q(\gP_1^{T}|\gP_1^{0})+\log p(\gP_1^0|\gP_1^1,\ggP_2^{0:T})+
    \sum_{t=2}^T\log \frac{p(\gP_1^{t-1}|\gP_1^t,\ggP_2^{0:T})}{q(\gP_1^{t-1}|\gP_1^{t},\gP_1^0)}\right] \\
    =&-\E_{\gP_{1}^{0:T},\gP_{2}^{0:T}}\left[\sum\nolimits_{t=1}^T D_{\text{KL}}\left(q(\gP_1^{t-1}|\gP_1^t,\gP_1^0)||p(\gP_1^{t-1}|\gP_1^t,\ggP_2^{0:T})\right)\right] + C^{(2\rightarrow 1)}\\
    =&-\gL^{(2\rightarrow 1)}+ C^{(2\rightarrow 1)},
\end{align*}
where we merge the term $p(\gP_1^0|\gP_1^1,\ggP_2^{0:T})$ into the sum of KL divergences for brevity and use $C^{(2\rightarrow 1)}$ to denote the constant independent of our encoder.
Note that the counterpart can be derived in the same way.
Adding these two terms together finishes the proof of Proposition~\ref{prop:mi}.
\end{proof}

\subsection{Proof of Pre-Training Loss Decomposition}
\label{app:sec:proof_loss}

We restate the proposition of pre-training loss decomposition rigorously as below.
\begin{proposition}
Given the assumptions 1) the separation of the diffusion process on protein structures and sequences 
\begin{align}\label{app_eq:q_assumption}
    q(\gP_a^{t}|\gP_a^{t-1})= q(\gR_a^{t}|\gR_a^{t-1})\cdot q(\gS_a^{t}|\gS_a^{t-1}),
\end{align}
and 2) the conditional independence of the generation process
\begin{align}\label{app_eq:p_assumption}
    &p_{\theta}(\gP_a^{t-1}|\gP_a^{t},\ggP_b^{t})= p_{\theta}(\gR_a^{t-1}|\gP_a^t,\ggP_b^{t})\cdot p_{\theta}(\gS_a^{t-1}|\gP_a^t,\ggP_b^{t}),
\end{align}
it can be proved that 
\begin{align}\label{app_eq:loss_decomp}
    \gL^{(b\rightarrow a)}&=\gL^{(\gR,b\rightarrow a)} + \gL^{(\gS,b\rightarrow a)},
\end{align}
where the three loss terms are defined as
\begin{align*}
    \gL^{(b\rightarrow a)} \coloneqq& \E\left[\begin{matrix}\sum\nolimits_{t=1}^T D_{\text{KL}}\left(q(\gP_a^{t-1}|\gP_a^t,\gP_a^0)||p_\theta(\gP_a^{t-1}|\gP_a^t,\ggP_b^{t})\right)\end{matrix}\right],\\
    \gL^{(\gR,b\rightarrow a)}
    \coloneqq&\E\left[\begin{matrix}\sum\nolimits_{t=1}^T D_{\text{KL}}\left(q(\gR_a^{t-1}|\gR_a^{t},\gR_a^0)||p_\theta(\gR_a^{t-1}|\gP_a^t,\ggP_b^t)\right)\end{matrix}\right],\\
    \gL^{(\gS,b\rightarrow a)}
    \coloneqq&\E\left[\begin{matrix}\sum\nolimits_{t=1}^T D_{\text{KL}}\left(q(\gS_a^{t-1}|\gS_a^{t},\gS_a^0)||p_\theta(\gS_a^{t-1}|\gP_a^t,\ggP_b^t)\right)\end{matrix}\right],
\end{align*}
with $b\rightarrow a$ referring to the term from trajectory $\gP_b^{0:T}$ to $\gP_a^{0:T}$.
\end{proposition}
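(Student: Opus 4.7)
The plan is to unwind each KL divergence in $\gL^{(b\rightarrow a)}$ into a sum of KL terms over the structure and sequence variables separately, exploiting the product-measure structure assumed in (\ref{app_eq:q_assumption}) and (\ref{app_eq:p_assumption}), then pull the sum over $t$ and the outer expectation through the decomposition.

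First I would establish that the forward posterior factorizes, i.e.\
\begin{equation*}
q(\gP_a^{t-1}\mid\gP_a^t,\gP_a^0)=q(\gR_a^{t-1}\mid\gR_a^t,\gR_a^0)\cdot q(\gS_a^{t-1}\mid\gS_a^t,\gS_a^0).
\end{equation*}
This is the step that uses (\ref{app_eq:q_assumption}): by Bayes' rule the posterior equals $q(\gP_a^t\mid\gP_a^{t-1})\,q(\gP_a^{t-1}\mid\gP_a^0)/q(\gP_a^t\mid\gP_a^0)$, and a short induction on $t$ (using (\ref{app_eq:q_assumption}) to marginalize $\gP_a^{1:t-2}$) shows that each marginal $q(\gP_a^{s}\mid\gP_a^0)$ likewise factorizes as a product of a structure marginal and a sequence marginal, so the ratio splits into the two claimed factors.

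Next I would apply the standard fact that the KL divergence between product distributions on a product space is additive: if $q=q_\gR\otimes q_\gS$ and $p=p_\gR\otimes p_\gS$, then $D_{\text{KL}}(q\|p)=D_{\text{KL}}(q_\gR\|p_\gR)+D_{\text{KL}}(q_\gS\|p_\gS)$. Combining the posterior factorization from the previous step with the model assumption (\ref{app_eq:p_assumption}) and this additivity, the $t$-th integrand in $\gL^{(b\rightarrow a)}$ breaks into a structure-only KL and a sequence-only KL. Finally, summing over $t$ and exchanging the sum with the outer expectation yields (\ref{app_eq:loss_decomp}).

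The only mildly delicate step is the inductive verification that $q(\gP_a^{s}\mid\gP_a^0)$ factorizes into independent structure and sequence components for every intermediate $s$; everything downstream is then a direct appeal to Bayes' rule and additivity of KL on product measures. The rest is routine manipulation, so I would present the factorization lemma as the core of the argument and state the additive decomposition of KL as a standard identity.
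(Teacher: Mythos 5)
Your proposal is correct and matches the paper's proof: factorize the posterior $q(\gP_a^{t-1}\mid\gP_a^t,\gP_a^0)$ using assumption~(\ref{app_eq:q_assumption}), factorize the model using~(\ref{app_eq:p_assumption}), apply additivity of KL for product distributions, and sum over $t$. The only difference is that you explicitly justify the posterior factorization via Bayes' rule and an induction showing $q(\gP_a^{s}\mid\gP_a^0)$ splits into structure and sequence marginals, whereas the paper simply asserts this factorization in its second equality; your added detail is the right thing to check and makes the argument self-contained.
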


\begin{proof}
Let $\gL_t^{(\cdot)}$ to denote the t-th KL divergence term in $\gL^{(\cdot)}$.
Then, we have
\begin{align*}
    &\gL_t^{(b\rightarrow a)}
    = D_{\text{KL}}\left(q(\gP_a^{t-1}|\gP_a^t,\gP_a^0)||p_\theta(\gP_a^{t-1}|\gP_a^t,\ggP_b^{0:T})\right) \\
    &=D_{\text{KL}}\left(\left[q(\gR_a^{t-1}|\gR_a^t,\gR_a^0)q(\gS_a^{t-1}|\gS_a^t,\gS_a^0)\right]||\left[p_\theta(\gR_a^{t-1}|\gP_a^t,\ggP_b^{0:T})p_\theta(\gS_a^{t-1}|\gP_a^t,\ggP_b^{0:T})\right]\right)\\
    &=D_{\text{KL}}\left(q(\gR_a^{t-1}|\gR_a^t,\gR_a^0)||p_\theta(\gR_a^{t-1}|\gP_a^t,\ggP_b^{0:T})\right) +
    D_{\text{KL}}\left(q(\gS_a^{t-1}|\gS_a^t,\gS_a^0)||p_\theta(\gS_a^{t-1}|\gP_a^t,\ggP_b^{0:T})\right) \\
    &= \gL_t^{(\gR,b\rightarrow a)}+\gL_t^{(\gS,b\rightarrow a)},
\end{align*}
where we use the assumptions~(\ref{app_eq:q_assumption}) and~(\ref{app_eq:p_assumption}) in the second equality.
The third equality is due to the additive property of the KL divergence for independent distributions.
Adding $T$ KL divergence terms together will lead to~(\ref{app_eq:loss_decomp}).
\end{proof}

\subsection{Proof of Simplified Structure Loss}
\label{app:sec:proof_loss_struct}

For completeness, we show how to derive the simplified structure loss.
The proof is directly adapted from~\citep{xu2022geodiff}.

\begin{proposition}
Given the definition of the forward process
\begin{align}
    q(\gR_a^{t}|\gR_a^{t-1})
    = \gN(\gR_a^t;\sqrt{1-\beta_t}\gR_a^{t-1},\beta_t I),
\end{align}
and the reverse process
\begin{align}
    &p_{\theta}(\gR_a^{t-1}|\gP_a^t,\ggP_b^t)
    =\gN(\gR_a^{t-1};\mu_\theta(\gP_a^t,\ggP_b^t,t),\sigma_t^2 I),\\
    &\mu_\theta(\gP_a^t,\ggP_b^t,t)
    =\frac{1}{\sqrt{\alpha_t}}\left(\gR_a^t-\frac{\beta_t}{\sqrt{1-\bar{\alpha}_t}}\epsilon_\theta(\gP_a^t,\ggP_b^t,t)\right),
\end{align}
the structure loss function
\begin{align}
    \gL^{(\gR,b\rightarrow a)}
    \coloneqq&\E\left[\begin{matrix}\sum\nolimits_{t=1}^T D_{\text{KL}}\left(q(\gR_a^{t-1}|\gR_a^{t},\gR_a^0)||p_\theta(\gR_a^{t-1}|\gP_a^t,\ggP_b^t)\right)\end{matrix}\right],
\end{align}
can be simplified to
\begin{align}
    \gL^{(\gR,b\rightarrow a)}
    =\begin{matrix}\sum\nolimits_{t=1}^T \gamma_t \E_{\epsilon\sim \gN(0,I)}\left[\|\epsilon - \epsilon_\theta(\gP_a^t,\ggP_b^t,t)\|_2^2\right]\end{matrix},
\end{align}
where $\gamma_t=\frac{\beta_t}{2\alpha_t(1-\bar{\alpha}_{t-1})}$ with $\alpha_t=1-\beta_t$, $\bar{\alpha}_t=\prod_{s=1}^t \alpha_s$ and $b\rightarrow a$ is either $2\rightarrow 1$ or $1\rightarrow 2$.
\end{proposition}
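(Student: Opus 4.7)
The plan is to follow the standard DDPM derivation adapted to the siamese setting, where conditioning on $\ggP_b^t$ simply carries through as an extra argument of $\epsilon_\theta$ without affecting the forward posterior. I would proceed in four steps.

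First, I would derive the analytic form of the forward posterior. Because the forward transitions are Gaussian, the marginal satisfies $\gR_a^t = \sqrt{\bar{\alpha}_t}\gR_a^0 + \sqrt{1-\bar{\alpha}_t}\,\epsilon$ with $\epsilon \sim \gN(0,I)$, and a direct application of Bayes' rule (Gaussian conjugacy) gives $q(\gR_a^{t-1}\mid \gR_a^t,\gR_a^0) = \gN\bigl(\tilde{\mu}_t(\gR_a^t,\gR_a^0),\,\tilde{\beta}_t I\bigr)$ with
\begin{equation*}
\tilde{\mu}_t(\gR_a^t,\gR_a^0) = \frac{\sqrt{\bar{\alpha}_{t-1}}\beta_t}{1-\bar{\alpha}_t}\gR_a^0 + \frac{\sqrt{\alpha_t}(1-\bar{\alpha}_{t-1})}{1-\bar{\alpha}_t}\gR_a^t,\qquad \tilde{\beta}_t = \frac{1-\bar{\alpha}_{t-1}}{1-\bar{\alpha}_t}\beta_t.
\end{equation*}

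Second, I would reparameterize $\tilde{\mu}_t$ in terms of the noise $\epsilon$ by substituting $\gR_a^0 = (\gR_a^t - \sqrt{1-\bar{\alpha}_t}\,\epsilon)/\sqrt{\bar{\alpha}_t}$ into the expression above. A short algebraic simplification collapses this to $\tilde{\mu}_t = \tfrac{1}{\sqrt{\alpha_t}}\bigl(\gR_a^t - \tfrac{\beta_t}{\sqrt{1-\bar{\alpha}_t}}\epsilon\bigr)$, which matches the functional form chosen for $\mu_\theta(\gP_a^t,\ggP_b^t,t)$ verbatim, only with the true noise $\epsilon$ replaced by the prediction $\epsilon_\theta$.

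Third, since $q(\gR_a^{t-1}\mid \gR_a^t,\gR_a^0)$ and $p_\theta(\gR_a^{t-1}\mid \gP_a^t,\ggP_b^t)$ are Gaussians with a common (isotropic) covariance $\sigma_t^2 I$, the KL divergence reduces to the familiar $\tfrac{1}{2\sigma_t^2}\|\tilde{\mu}_t - \mu_\theta\|_2^2$. Substituting the two matched parameterizations, the $\gR_a^t/\sqrt{\alpha_t}$ terms cancel and one obtains $\tfrac{\beta_t^2}{2\sigma_t^2\alpha_t(1-\bar{\alpha}_t)}\,\|\epsilon - \epsilon_\theta(\gP_a^t,\ggP_b^t,t)\|_2^2$. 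Taking the expectation over $\epsilon$ and the data, summing over $t$, and fixing $\sigma_t^2 = \tilde{\beta}_t$ (the standard DDPM choice) makes the prefactor collapse to $\gamma_t = \tfrac{\beta_t}{2\alpha_t(1-\bar{\alpha}_{t-1})}$, as claimed.

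The only real obstacle is algebraic bookkeeping: verifying that $\tilde{\mu}_t$ really collapses to the DDPM form after the $\epsilon$-reparameterization, and that $\tfrac{\beta_t^2}{2\tilde{\beta}_t\alpha_t(1-\bar{\alpha}_t)}$ simplifies cleanly to $\gamma_t$. The dependence on $\ggP_b^t$ introduces no new difficulty because the posterior $q(\gR_a^{t-1}\mid \gR_a^t,\gR_a^0)$ does not see it, and $\epsilon_\theta$ absorbs it transparently as an extra input. Thus the entire proof reduces, modulo notation, to the single-trajectory DDPM calculation of Ho et al., and the final simplified form follows.
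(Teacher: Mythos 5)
Your proposal is correct and follows essentially the same route as the paper's proof: derive the Gaussian posterior $q(\gR_a^{t-1}\mid \gR_a^t,\gR_a^0)$ with mean $\tilde{\mu}_t$ and variance $\tilde{\beta}_t$, use the $\epsilon$-reparameterization of $\gR_a^0$ to match $\tilde{\mu}_t$ to the DDPM form of $\mu_\theta$, compute the same-covariance Gaussian KL as a scaled squared-mean difference (implicitly taking $\sigma_t^2=\tilde{\beta}_t$, as the paper also does), and simplify the prefactor $\tfrac{\beta_t^2}{2\tilde{\beta}_t\alpha_t(1-\bar{\alpha}_t)}$ to $\gamma_t=\tfrac{\beta_t}{2\alpha_t(1-\bar{\alpha}_{t-1})}$. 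The only cosmetic difference is that you reparameterize $\tilde{\mu}_t$ in $\epsilon$ before forming the KL, whereas the paper substitutes inside the KL norm; this is the same calculation in a different order.
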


\begin{proof}
First, we prove $q(\gR_a^{t}|\gR_a^0)=\gN(\gR_a^t;\sqrt{\bar{\alpha_t}}\gR_a^0,(1-\bar{\alpha}_t)I)$.
Let $\epsilon_i$ be the standard Gaussian random variable at time step $i$. 
Then, we have
\begin{align*}
    \gR_a^{t}
    &=\sqrt{\alpha_t}\gR_a^{t-1} + \sqrt{\beta_t}\epsilon_t\\
    &=\sqrt{\alpha_{t-1}\alpha_t}\gR_a^{t-2} + \sqrt{\alpha_{t-1}\beta_{t-1}}\epsilon_{t-1}+\sqrt{\beta_t}\epsilon_t\\
    &=\cdots\\
    &=\sqrt{\bar{\alpha}_t}\gR_a^0 + \sqrt{\alpha_t\alpha_{t-1}...\alpha_2\beta_1}\epsilon_1 + \cdots + \sqrt{\alpha_{t-1}\beta_{t-1}}\epsilon_{t-1}+\sqrt{\beta_t}\epsilon_t,
\end{align*}
which suggests that the mean of $\gR_a^{t}$ is $\sqrt{\bar{\alpha}_t}\gR_a^0$ and the variance matrix is $(\alpha_t\alpha_{t-1}...\alpha_2\beta_1+\cdots+\alpha_{t-1}\beta_{t-1}+\beta_t)I=(1-\bar{\alpha})I$.

Next, we derive the posterior distribution as:
\begin{align*}
    q(\gR_a^{t-1}|\gR_a^t,\gR_a^0)
    &= \frac{q(\gR_a^{t}|\gR_a^{t-1})q(\gR_a^{t-1}|\gR_a^0)}{q(\gR_a^t|\gR_a^0)} \\
    &= \frac{\gN(\gR_a^t;\sqrt{\alpha_t}\gR_a^{t-1},\beta_t I)\cdot\gN(\gR_a^{t-1};\sqrt{\bar{\alpha}_{t-1}}\gR_a^0,(1-\bar{\alpha}_{t-1})I)}{\gN(\gR_a^{t};\sqrt{\bar{\alpha}_{t}}\gR_a^0,(1-\bar{\alpha}_{t})I)} \\
    &=\gN(\gR_a^{t-1};\frac{\sqrt{\bar{\alpha}_{t-1}}\beta_t}{1-\bar{\alpha}_t}\gR_a^0 + \frac{\sqrt{\alpha_t}(1-\bar{\alpha}_{t-1})}{1-\bar{\alpha}_t}\gR_a^t,\frac{1-\bar{\alpha}_{t-1}}{1-\bar{\alpha}_t}\beta_t I).
\end{align*}

Let $\tilde{\beta}_t=\frac{1-\bar{\alpha}_{t-1}}{1-\bar{\alpha}_t}\beta_t$, then the $t$-th KL divergence term can be written as:
\begin{align*}
    &D_{\text{KL}}\left(q(\gR_a^{t-1}|\gR_a^{t},\gR_a^0)||p_\theta(\gR_a^{t-1}|\gP_a^t,\ggP_b^t)\right)\\
    =&\frac{1}{2\tilde{\beta}_t}\left\|\frac{\sqrt{\bar{\alpha}_{t-1}}\beta_t}{1-\bar{\alpha}_t}\gR_a^0 + \frac{\sqrt{\alpha_t}(1-\bar{\alpha}_{t-1})}{1-\bar{\alpha}_t}\gR_a^t-\frac{1}{\sqrt{\alpha_t}}\left(\gR_a^t-\frac{\beta_t}{\sqrt{1-\bar{\alpha}_t}}\epsilon_\theta(\gP_a^t,\ggP_b^t,t)\right)\right\|^2\\
    =&\frac{1}{2\tilde{\beta}_t}\E_\epsilon\left\|\frac{\sqrt{\bar{\alpha}_{t-1}}\beta_t}{1-\bar{\alpha}_t}\cdot \frac{\gR_a^t-\sqrt{1-\bar{\alpha}_t}\epsilon}{\sqrt{\bar{\alpha}_t}} + \frac{\sqrt{\alpha_t}(1-\bar{\alpha}_{t-1})}{1-\bar{\alpha}_t}\gR_a^t-\frac{1}{\sqrt{\alpha_t}}\left(\gR_a^t-\frac{\beta_t}{\sqrt{1-\bar{\alpha}_t}}\epsilon_\theta(\gP_a^t,\ggP_b^t,t)\right)\right\|^2\\
    =&\frac{1}{2\tilde{\beta}_t}\cdot\frac{\beta_t^2}{\alpha_t(1-\bar{\alpha}_t)}\E_\epsilon\left\|\epsilon-\epsilon_\theta(\gP_a^t,\ggP_b^t,t)\right\|\\
    =&\gamma_t \E_{\epsilon}\left[\|\epsilon - \epsilon_\theta(\gP_a^t,\ggP_b^t,t)\|_2^2\right],
\end{align*}
which completes the proof.
\end{proof}

\subsection{Proof of Simplified Sequence Loss}
\label{app:sec:proof_loss_seq}

Now we show the equivalence of optimizing sequence loss $\gL^{(\gS,b\rightarrow a)}$ and the masked residue type prediction problem on $\gS_a^0$.

\begin{proposition}
Given the definition of reverse process on protein sequences
\begin{align}
    p_\theta(\gS_a^{t-1}|\gP_a^t,\ggP_b^t)\propto
    \begin{matrix}
    \sum\nolimits_{\tilde{\gS}_a^0}
    q(\gS_a^{t-1}|\gS_a^{t},\tilde{\gS}_a^{0})\cdot
    \tilde{p}_\theta(\tilde{\gS}_a^0|\gP_a^t,\ggP_b^t)\end{matrix},
\end{align}
the sequence loss $\gL^{(\gS,b\rightarrow a)}$ reaches zero when $\tilde{p}_\theta(\tilde{\gS}_a^0|\gP_a^t,\ggP_b^t)$  puts all mass on the ground truth $\gS_a^0$.
\end{proposition}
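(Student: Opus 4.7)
The plan is to show that when $\tilde{p}_\theta(\tilde{\gS}_a^0|\gP_a^t,\ggP_b^t)$ is a point mass on the ground truth $\gS_a^0$, the parameterized reverse kernel $p_\theta(\gS_a^{t-1}|\gP_a^t,\ggP_b^t)$ collapses exactly to the forward posterior $q(\gS_a^{t-1}|\gS_a^t,\gS_a^0)$, so that every KL term in $\gL^{(\gS,b\rightarrow a)}$ vanishes.

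First, I would substitute the assumed form of $\tilde{p}_\theta$ into the parameterization of the reverse process. Writing $\tilde{p}_\theta(\tilde{\gS}_a^0\mid\gP_a^t,\ggP_b^t)=\mathbbm{1}[\tilde{\gS}_a^0=\gS_a^0]$, the sum over $\tilde{\gS}_a^0$ collapses to the single term at $\tilde{\gS}_a^0=\gS_a^0$, giving
\begin{equation*}
p_\theta(\gS_a^{t-1}\mid\gP_a^t,\ggP_b^t)\;\propto\;q(\gS_a^{t-1}\mid\gS_a^t,\gS_a^0).
\end{equation*}
Both sides are probability distributions over $\gS_a^{t-1}$, so the proportionality constant must be $1$, yielding the exact equality $p_\theta(\gS_a^{t-1}\mid\gP_a^t,\ggP_b^t)=q(\gS_a^{t-1}\mid\gS_a^t,\gS_a^0)$.

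Next, I would plug this equality into each KL summand. Since the two distributions coincide pointwise, $D_{\text{KL}}\!\left(q(\gS_a^{t-1}\mid\gS_a^t,\gS_a^0)\,\|\,p_\theta(\gS_a^{t-1}\mid\gP_a^t,\ggP_b^t)\right)=0$ for every $t=1,\dots,T$ and every realization of $(\gS_a^0,\gS_a^t,\gP_a^t,\ggP_b^t)$. Taking the expectation and summing over $t$, we conclude $\gL^{(\gS,b\rightarrow a)}=0$, as claimed.

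There is essentially no hard step here: the proposition is a sanity check that the parameterization of $p_\theta$ in terms of $\tilde{p}_\theta$ is \emph{consistent}, in the sense that correctly predicting $\gS_a^0$ from the noisy input is sufficient to drive the variational objective to its minimum. The only subtlety worth stating explicitly is normalization, i.e., verifying that the proportionality in the definition of $p_\theta$ becomes an equality once the mixing measure degenerates to a Dirac mass, which is why I spelled out that intermediate line. This motivates the simplification from KL divergences on $\gS_a^{t-1}$ to the cross-entropy loss on $\gS_a^0$ used in the training objective $\gL_{\text{simple}}^{(\gS)}$.
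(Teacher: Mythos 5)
Your argument is correct and is essentially the paper's own proof: substitute the point mass into the definition of $p_\theta$, observe that the sum collapses to $q(\gS_a^{t-1}\mid\gS_a^t,\gS_a^0)$, and conclude that each KL term vanishes. The only difference is cosmetic — you explicitly argue that the proportionality constant must be $1$ because both sides are normalized distributions over $\gS_a^{t-1}$, whereas the paper carries along a normalization constant $Z$ and then notes the two distributions coincide; both make the same point.
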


\begin{proof}
The loss function can be written as:
\begin{align*}
    \gL^{(\gS,b\rightarrow a)}
    &\coloneqq\E\left[\begin{matrix}\sum\nolimits_{t=1}^T D_{\text{KL}}\left(q(\gS_a^{t-1}|\gS_a^{t},\gS_a^0)||p_\theta(\gS_a^{t-1}|\gP_a^t,\ggP_b^t)\right)\end{matrix}\right]\\
    &=\E\left[\sum\nolimits_{t=1}^T D_{\text{KL}}\left(q(\gS_a^{t-1}|\gS_a^{t},\gS_a^0)\bigg|\bigg|
    \frac{\sum\nolimits_{\tilde{\gS}_a^0}
    q(\gS_a^{t-1}|\gS_a^{t},\tilde{\gS}_a^{0})\cdot
    \tilde{p}_\theta(\tilde{\gS}_a^0|\gP_1^t,\ggP_2^t)}{Z}
    \right)\right],
\end{align*}
where $Z$ is the normalization constant.
Hence, when $\tilde{p}_\theta(\tilde{\gS}_a^0|\gP_a^t,\ggP_b^t)$  puts all mass on the ground truth $\gS_a^0$, the distribution $p_\theta(\gS_a^{t-1}|\gP_a^t,\ggP_b^t)$ will be identical with $q(\gS_a^{t-1}|\gS_a^{t},\gS_a^0)$, which makes the KL divergence become zero.
\end{proof}

\section{Experimental Details}
\label{supp:sec:setup}

In this section, we introduce the details of our experiments. 
All these methods are developed based on PyTorch and TorchDrug~\citep{zhu2022torchdrug}.

\textbf{Downstream benchmark tasks.} For downstream evaluation, we adopt the EC prediction task~\citep{gligorijevic2021structure} and four ATOM3D tasks~\citep{townshend2020atom3d}.
\begin{enumerate}[topsep=0pt,itemsep=2pt,parsep=0pt]
    \item \textbf{Enzyme Commission (EC) number prediction} task aims to predict EC numbers of proteins which describe their catalysis behavior in biochemical reactions. This task is formalized as 538 binary classification problems.
    % based on the third and fourth levels of EC tree~\citep{webb1992enzyme}. 
    We adopt the dataset splits from \citet{gligorijevic2021structure} and use the test split with 95\% sequence identity cutoff following \citet{zhang2022protein}.
    % \item \textbf{\emph{Gene Ontology (GO) term prediction}} is a suite of three benchmarks aiming at predicting the biological process (BP), molecular function (MF) and cellular component (CC) related to a protein, respectively. Each benchmark task is formalized as multiple binary classification problems based on the GO term annotations. We follow the dataset splits proposed by \citet{gligorijevic2021structure} and take the test split at 95\% sequence identity cutoff following \citet{zhang2022protein}. 
    \item \textbf{Protein Interface Prediction (PIP)} requires the model to predict whether two amino acids from two proteins come into contact when the proteins bind (binary classification). The protein complexes of this benchmark are split with 30\% sequence identity cutoff. 
    \item \textbf{Mutation Stability Prediction (MSP)} task seeks to predict whether a mutation will increase the stability of a protein complex or not (binary classification). The benchmark dataset is split upon a 30\% sequence identity cutoff among different splits.
    \item \textbf{Residue Identity (RES)} task studies the structural role of an amino acid under its local environment. A model predicts the type of the center amino acid based on its surrounding atomic structure. The environments in different splits are with different protein topology classes.  
    \item \textbf{Protein Structure Ranking (PSR)} predicts global distance test scores of structure predictions submitted to the Critical Assessment of Structure Prediction (CASP)~\citep{kryshtafovych2019critical} competition. This dataset is split according to the competition year. 
\end{enumerate}

\paragraph{Graph construction.}
For atom graphs, we connect atoms with Euclidean distance lower than a distance threshold.
For PSR and MSP tasks, we remove all hydrogen atoms following~\citet{jing2022equivariant}.
For residue graphs, we discard all non-alpha-carbon atoms and add three different types of directed edges: sequential edges, radius edges and K-nearest neighbor edges.
For sequential edges, two atoms are connected if their sequential distance is below a threshold and these edges are divided into different types according to these distances.
For two kinds of spatial edges, we connect atoms according to Euclidean distance and k-nearest neighbors.
We further apply a long range interaction filter that removes edges with low sequential distances. We refer readers to \citet{zhang2022protein} for more details. 

\paragraph{Atom-level backbone models.} 
To adapt GearNet-Edge to atom-level structures with moderate computational cost, we construct the atom graph by using only the spatial edge with the radius $d_{\text{radius}}=4.5\AA$. 
We concatenate one-hot features of atom types and residue types as node features and concatenate (1) one-hot features of residue types of end nodes, (2) one-hot features of edge types, (3) one-hot features of sequential distance, (4) spatial distance as edge features. 
The whole model is composed of 6 message passing layers with 128 hidden dimensions and ReLU activation function. 
For edge message passing, we employ the discretized angles to determine the edge types on the line graph. 
The final prediction is performed upon the hidden representation concatenated across all layers.

\paragraph{Residue-level backbone models.}
We directly borrow the best hyperparameters reported in the original paper of GearNet-Edge~\citep{zhang2022protein}.
We adopt the same configuration of relational graph construction, \emph{i.e.}, the sequential distance threshold $d_{\text{seq}}=3$, the radius $d_{\text{radius}}=10.0\AA$, the number of neighbors $k=10$ and the long range interaction cutoff $d_{\text{long}}=5$.
We use one-hot features of residue types as node features and concatenate (1) features of end nodes, (2) one-hot features of edge types, (3) one-hot features of sequential distance, (4) spatial distance as edge features.
Then we use 6 message passing layers with 512 hidden dimensions and ReLU as the activation function.
For edge message passing, the edge types on the line graph are determined by the discretized angles.
The hidden representations in each layer of GearNet will be concatenated for the final prediction.
% Since only alpha carbon atoms are kept in the graph, their representations are used for both atom and residue representations.

\paragraph{Baseline pre-training methods.}
Here we briefly introduce the considered baselines.
Multiview Contrast aims to maximize the mutual information between correlated views, which are extracted by randomly chosen augmentation functions to capture protein sub-structures.
Residue type, distance, angle and dihedral prediction masks single residues, single edges, edge pairs and edge triplets, respectively, and then predict the corresponding properties.
Denoising score matching performs denoising on noised pairwise distance matrices based on the learnt representations.

For all baselines in~\citep{zhang2022protein}, we adopt the original configurations.
For Multiview Contrast, we use subsequence cropping that randomly extracts protein subsequences with no more than 50 residues and space cropping that takes all residues within a $15\AA$ Euclidean ball with a random center residue.
Then, either an identity function or a random edge masking function with mask rate equal to 0.15 is applied for constructing views.
The temperature $\tau$ in the InfoNCE loss function is set as 0.07.
We set the number of sampled items in each protein  as 256 for Distance Prediction and as 512 for Angle and Dihedral Prediction.
The mask rate for Residue Type Prediction is set as 0.15.
When masking a residue on atom graphs, we discard all non-backbone atoms and set the residue features as zero.
Since the backbone models and tasks in our paper are quite different with those in~\citet{Guo2022SelfSupervisedPF}, we re-implement the method on our codebase.
We consider 50 different noise levels log-linearly ranging from 0.01 to 10.0.

% For our method, we set the variance of structure perturbation noises $\epsilon$ as 0.1 for atom graph and 0.3 for residue graph when constructing the correlated conformer.
In {\baseline}, for structure diffusion, we use a sigmoid schedule for variances $\beta_t$ with the lowest variance $\beta_1=1e-4$ and the highest variance $\beta_T=0.1$.
For sequence diffusion, we simply set the cumulative transition probability to [MASK] over time steps as a linear interpolation between minimum mask rate 0.15 and maximum mask rate 1.0.
The number of diffusion steps is set as 100.
In {\method}, we adopt the same hyperparameters for multimodal diffusion models.
We set the variance of torsional perturbation noises as $0.1\pi$ on the atom level and that of Gaussian perturbation noises as $0.3$ on the residue level when constructing the correlated conformer.

All other optimization configurations for these pre-training methods are reported in Table~\ref{tab:hyper}.
All methods are pre-trained on 4 Tesla A100 GPUs and Table~\ref{tab:hyper} reports the batch sizes on each GPU.

\begin{table}[!h]
    \centering
    \caption{Optimization configurations for pre-training methods. Here max length denotes the maximum number of residues kept in each protein and lr stands for learning rate.}
    \label{tab:hyper}
    \begin{adjustbox}{max width=\linewidth}
        %\footnotesize
        \begin{tabular}{lcccccccc}
            \toprule
            \multirow{2}{*}{\bf{Method}}
            & \multicolumn{2}{c}{\bf{Max length}} && \multicolumn{2}{c}{\bf{Batch size}} & 
            \multirow{2}{*}{\bf{Optimizer}}
             & \multirow{2}{*}{\bf{lr}}
             \\
            \cmidrule{2-3}
            \cmidrule{5-6}
            &residue&atom&&residue&atom&&\\
            \midrule
            Residue Type Prediction & 100 & 100 && 96 & 64 & Adam & 1e-3\\
            Distance Prediction & 100 & 100 && 128 & 64 & Adam & 1e-3\\
            Angle Prediction & 100 & 100 && 96 & 64 & Adam & 1e-3\\
            Dihedral Prediction & 100 & 100 && 96 & 64 & Adam & 1e-3\\
            Multiview Contrast & - & - && 96 & 64 & Adam & 1e-3\\
            Denoising Score Matching & 200 & 200 && 12 & 12 & Adam & 1e-4\\
            \textbf{\baseline} & 150 & 100 && 16 & 64 & Adam & 1e-4\\
            \textbf{\method} & 150 & 100 && 16 & 32 & Adam & 1e-4\\
            \bottomrule
        \end{tabular}
    \end{adjustbox}
\end{table}

\paragraph{Fine-tuning on downstream tasks.}

For all models on all downstream tasks, we apply a three-layer MLP head for prediction, the hidden dimension of which is set to the dimension of model outputs.
The number of used gpus and batch sizes for each model are chosen according the memory limit.
All residue-level tasks are run on 4 V100 GPUs while all atom-level tasks are run on A100 GPUs.

\begin{table}[!h]
    \centering
    \caption{Optimization configurations for downstream evaluations. Here max length denotes the maximum number of residues kept in each protein and lr stands for learning rate.}
    \label{tab:hyper_down}
    \begin{adjustbox}{max width=\linewidth}
        %\footnotesize
        \begin{tabular}{lcccccccc}
            \toprule
            \multirow{2}{*}{\bf{Task}}
            & \multicolumn{2}{c}{\bf{\# GPUS}} && \multicolumn{2}{c}{\bf{Batch size}} & 
            \multirow{2}{*}{\bf{Optimizer}}
             & \multirow{2}{*}{\bf{lr}}
             \\
            \cmidrule{2-3}
            \cmidrule{5-6}
            &residue&atom&&residue&atom&&\\
            \midrule
            EC & 4 & N/A && 2 & N/A & Adam & 1e-4\\
            PIP & N/A & 1 && N/A & 8 & Adam & 1e-4\\
            MSP & 4 & 1 && 1 & 8 & Adam & 1e-4\\
            RES & N/A & 4 && N/A & 64 & Adam & 1e-4\\
            PSR & 4 & 1 && 8 & 8 & Adam & 1e-4\\
            \bottomrule
        \end{tabular}
    \end{adjustbox}
\end{table}

\paragraph{Evaluation metrics.} We clarify the definitions of F\textsubscript{max} (used in EC), global Spearman's $\rho$ (used in PSR) and mean Spearman's $\rho$ (used in PSR) as below:
\begin{itemize}
    \item \textbf{F\textsubscript{max}} denotes the protein-centric maximum F-score. It first computes the precision and recall for each protein at a decision threshold $t \in [0,1]$:
    \begin{equation} \label{supp:eq:fmax:precision_recall}
        \text{precision}_i(t)=\frac{\sum_f \mathbbm{1}[f\in P_i(t) \cap T_i]}{\sum_f \mathbbm{1}[f\in P_i(t)]}, \quad \text{recall}_i(t)=\frac{\sum_f \mathbbm{1}[f\in P_i(t) \cap T_i]}{\sum_f \mathbbm{1}[f\in T_i]},
    \end{equation}
    where $f$ denotes a functional term in the ontology, $T_i$ is the set of experimentally determined functions for protein $i$, $P_i(t)$ is the set of predicted functions for protein $i$ whose scores are greater or equal to $t$, and $\mathbbm{1}[\cdot]$ represents the indicator function. After that, the precision and recall are averaged over all proteins:
    \begin{equation} \label{supp:eq:fmax:average}
        \text{precision}(t)=\frac{1}{M(t)}\sum_{i} \text{precision}_i(t), \quad \text{recall}(t)=\frac{1}{N}\sum_{i} \text{recall}_i(t),
    \end{equation}
    where $N$ denotes the total number of proteins, and $M(t)$ denotes the number of proteins which contain at least one prediction above the threshold $t$, \emph{i.e.}, $|P_i(t)| > 0$.
    
    Based on these two metrics, the F\textsubscript{max} score is defined as the maximum value of F-measure over all thresholds:
    \begin{equation} \label{supp:eq:fmax}
        \text{F\textsubscript{max}}=
        \max_t\left\{\frac{2\cdot \text{precision}(t)\cdot \text{recall}(t)}{\text{precision}(t)+ \text{recall}(t)}\right\}.
    \end{equation}
    \item \textbf{Global Spearman's $\rho$ for PSR} measures the correlation between the predicted global distance test (GDT\_TS) score and the ground truth. It computes the Spearman's $\rho$ between the prediction and the ground truth over all test proteins without considering the different biopolymers that these proteins lie in.
    \item \textbf{Mean Spearman's $\rho$ for PSR} also measures the correlation between GDT\_TS predictions and the ground truth. However, it first splits all test proteins into multiple groups based on their corresponding biopolymers, then computes the Spearman's $\rho$ within each group, and finally reports the mean Spearman's $\rho$ over all groups. 
\end{itemize}

\section{Results of Pre-Training on Different Sizes of Datasets}
\label{app:sec:pretrain_dataset}

%%%%%%%%%%%%%%%%%%%%%%%%%%%%%%%%%%%%%%%%%%%%%%%%%%%%%%%%%%%%

\begin{table}[h]
    \centering
    \caption{Atom-level results of pre-training on different sizes of datasets on Atom3D tasks.}
    \label{tab:cluster_af_db}
    \begin{adjustbox}{max width=0.95\linewidth}
        \begin{tabular}{llcccccccc}
            \toprule
            \multirow{2}{*}{\bf{Pre-training Dataset}} & \multirow{2}{*}{\bf{Size}}
            &
            \multicolumn{2}{c}{\bf{PSR}} & &
            \bf{MSP} & & \bf{PIP} & &
            \bf{RES}\\
            \cmidrule{3-4}
            \cmidrule{6-6}
            \cmidrule{8-8}
            \cmidrule{10-10}
            & & Global $\rho$ & Mean $\rho$ & & AUROC & & AUROC & & Acc.\\
            \midrule
            AlphaFold DB v1 & 365K & 0.829 & \underline{0.546} & & \underline{0.698} & & \bf{0.884} & & \bf{0.460}\\
            \midrule
            Clustered AlphaFold DB & 10K & 0.816 & 0.501 &  & 0.586  & & 0.880 & & 0.444\\
            Clustered AlphaFold DB & 50K & 0.797 & 0.498 & & 0.648 & & 0.879 & &  0.450\\
            Clustered AlphaFold DB & 100K & 0.805 & 0.540 & & 0.685 & & 0.882 & &  0.454\\
            Clustered AlphaFold DB & 500K & \bf{0.848} & 0.537 & & 0.599 & & 0.880 & &  0.459\\
            Clustered AlphaFold DB & 2.2M & \underline{0.840} & \bf{0.560} &  & \bf{0.700}  & & \underline{0.882} & & \bf{0.460}\\
            \bottomrule
        \end{tabular}
    \end{adjustbox}
\end{table}

%%%%%%%%%%%%%%%%%%%%%%%%%%%%%%%%%%%%%%%%%%%%%%%%%%%%%%%%%%%%

In the main paper, we followed the setting in \citet{zhang2022protein} and used AlphaFold Database v1 as our pre-training dataset for fair comparison. Here, we investigate the impact of pre-training on different dataset sizes. Since previous work by \citet{zhang2022protein} showed minimal differences between using experimental or predicted structures, we conduct experiments on the AlphaFold Database in this section and do not use PDB as our pre-training dataset. We utilize the preprocessed clustered AlphaFold Database provided in \citep{barrio2023clustering}, which includes 2.2M non-singleton clusters with an average of 13.2 proteins per cluster and an average pLDDT of 71.59. For each cluster, we use the representative structure from \citep{barrio2023clustering}. To explore the effects of dataset size, we pre-train our encoder using 10K, 50K, 100K, 500K, and 2.2M clusters in the database. The results, shown in Table~\ref{tab:cluster_af_db}, reveal a general trend of increased performance with larger datasets. However, for certain tasks like MSP, the performance does not consistently improve, possibly due to the limited size and variability of the downstream datasets. Overall, scaling the model to the entire AlphaFold Database holds promise for achieving performance gains.

\section{Results of Multimodal Pre-Training Baselines}
\label{app:sec:multimodal_baseline}

%%%%%%%%%%%%%%%%%%%%%%%%%%%%%%%%%%%%%%%%%%%%%%%%%%%%%%%%%%%%

\begin{table*}[h]
    \centering
    \caption{Atom-level results of multimodal pre-training baselines on Atom3D tasks.
    }
    \label{tab:seq_struct_result}
    \begin{adjustbox}{max width=0.95\linewidth}
        \begin{tabular}{llcccccccc}
            \toprule
            & \multirow{2}{*}{\bf{Method}}
            &
            \multicolumn{2}{c}{\bf{PSR}} & &
            \bf{MSP} & & \bf{PIP} & &
            \bf{RES} \\
            \cmidrule{3-4}
            \cmidrule{6-6}
            \cmidrule{8-8}
            \cmidrule{10-10}
            & & Global $\rho$ & Mean $\rho$ & & AUROC & & AUROC & & Acc.\\
            \midrule
            & GearNet-Edge
            & 0.782 & 0.488 && 0.633 && 0.868 && 0.441  \\
            \midrule
            \multirow{6}{*}{\rotatebox{90}{\bf{\small w/ pre-training}}$\;$}
            & Residue Type Prediction & 0.826& 0.518 && 0.620 && 0.879 && 0.449 \\
            & Residue Type \& Distance Prediction
            &  0.817 & 0.518 & & 0.665 & & 0.873 & &  0.402  \\
            & Residue Type \& Angle Prediction & \bf{0.837} & 0.524 & & 0.642  & & 0.878 & & 0.415\\
            & Residue Type \& Dihedral Prediction & 0.823 & 0.494 &  & 0.597  & & 0.871 & & 0.414 \\
            \cmidrule{2-10}
            & \textbf{\baseline} & 0.821 & \underline{0.533} & & \underline{0.680} && \underline{0.880} &&  \underline{0.452}\\
            & \textbf{\method} & \underline{0.829} & \bf{0.546} & & \bf{0.698} && \bf{0.884} && \bf{0.460} \\
            \bottomrule
        \end{tabular}
    \end{adjustbox}
\end{table*}

In Sec.~\ref{sec:exp}, we include all pre-training baselines from existing works, which solely focus on unimodal pre-training objectives and overlook the joint distribution of sequences and structures. In this section, we introduce three additional pre-training baselines that leverage both sequence and structure information. These baselines combine residue type prediction with distance/angle/dihedral prediction objectives. Following established settings, we pre-train the encoder and present the results in Table~\ref{tab:seq_struct_result}. While the Residue Type \& Angle Prediction method achieves higher performance on PSR, there are still substantial gaps compared to our {\baseline} and {\method} across other tasks. Notably, the introduction of geometric property prediction tasks leads to a drop in performance on RES, indicating that a simple combination of pre-training objectives diminishes the benefits of each individual objective. Once again, these findings underscore the effectiveness of our methods in modeling the joint distribution of protein sequences and structures.

%%%%%%%%%%%%%%%%%%%%%%%%%%%%%%%%%%%%%%%%%%%%%%%%%%%%%%%%%%%%

\section{Results of Different Diffusion Models for Pre-Training}
\label{app:sec:diffpret}

%%%%%%%%%%%%%%%%%%%%%%%%%%%%%%%%%%%%%%%%%%%%%%%%%%%%%%%%%%%%

\begin{table*}[h]
    \centering
    \caption{Results of different diffusion models for pre-training on atom-level Atom3D tasks.
    }
    \label{tab:result_diff}
    \begin{adjustbox}{max width=0.95\linewidth}
        \begin{tabular}{llcccccc}
            \toprule
            & \multirow{2}{*}{\bf{Method}}
            &
            \bf{PIP} & &
            \bf{RES} & & 
            \multicolumn{2}{c}{\bf{PSR}} \\
            \cmidrule{3-3}
            \cmidrule{5-5}
            \cmidrule{7-8}
            & & AUROC & & Accuracy & &  Global $\rho$ & Mean $\rho$\\
            \midrule
            & GearNet-Edge
            & 0.868 & & 0.441 && 0.782 & 0.488 \\
            \midrule
            \multirow{5}{*}{\rotatebox{90}{\small \bf{w/ pre-training}}}
            & \textbf{\baseline} & \underline{0.880} & & 0.452 && \underline{0.821} & \underline{0.533}\\
            & \textbf{\method} & \bf{0.884} && \bf{0.460} && \bf{0.829} & \bf{0.546} \\
            \cmidrule{2-8}
            & Sequence Diffusion & {0.879} & & \underline{0.456} && 0.802 & 0.508\\
            & Structure Diffusion & 0.877 & & 0.448 && 0.813 & 0.518 \\
            & Torsional Diffusion & 0.877 & & 0.442 && {0.819} & 0.505 \\
            % & Torsional + Backbone Angle Diffusion & \bf{0.881} && \underline{0.451} && \bf{0.835} & \bf{0.556}\\
            \bottomrule
        \end{tabular}
    \end{adjustbox}
\end{table*}

%%%%%%%%%%%%%%%%%%%%%%%%%%%%%%%%%%%%%%%%%%%%%%%%%%%%%%%%%%%%

In Sec.~\ref{sec:diffusion}, we consider joint diffusion models on protein sequences and structures for pre-training.
In this section, we explore the performance of different diffusion models when applied for pre-training.
The results are shown in Table~\ref{tab:result_diff}.

First, we simply run diffusion models on protein sequences and structures for pre-training, both of which achieve improvement compared with the baseline GearNet-Edge.
By combining two diffusion models, {\baseline} can achieve better performance on PIP and PSR.
This advantage will be further increased after using siamese diffusion trajectory prediction as shown in Table~\ref{tab:ablation}.
It can be observed that sequence diffusion achieves better performance than {\baseline} due to the consistency of objectives between pre-training and RES tasks.

Besides, we also consider diffusion models on torsional angles of protein side chains for pre-training.
This method has shown its potential in the protein-ligand docking task~\citep{corso2022diffdock}.
For each residue, we randomly select one  side-chain torsional angle and add some noises drawn from wrapped Gaussian distribution during the diffusion process.
Then, we predict the added noises with the extracted atom representations corresponding to the torsional angle.
In Table~\ref{tab:result_diff}, we can see clear improvements on PIP and PSR tasks compared with GearNet-Edge.
This suggests that it would be a promising direction to explore more different diffusion models for pre-training, \emph{e.g.}, diffusion models on backbone dihedral angles.

\section{Results of Pre-Training GVP}
\label{app:sec:gvp}
%%%%%%%%%%%%%%%%%%%%%%%%%%%%%%%%%%%%%%%%%%%%%%%%%%%%%%%%%%%%

\begin{table*}[h]
    \centering
    \caption{Atom-level results of GVP on Atom3D tasks.
    Accuracy is abbreviated as Acc.
    }
    \label{tab:atom_gvp_result}
    \begin{adjustbox}{max width=0.95\linewidth}
        \begin{tabular}{llcccccccccc}
            \toprule
            & \multirow{2}{*}{\bf{Method}}
            &
            \multicolumn{2}{c}{\bf{PSR}} & &
            \bf{MSP} & & \bf{PIP} & &
            \bf{RES} & & \multirow{2}{*}{\makecell[c]{\bf{Mean} \\ \bf{Rank}}} \\
            \cmidrule{3-4}
            \cmidrule{6-6}
            \cmidrule{8-8}
            \cmidrule{10-10}
            & & Global $\rho$ & Mean $\rho$ & & AUROC & & AUROC & & Acc. & & \\
            \midrule
            & GVP
            & 0.809 & 0.486 & & 0.610 & & 0.846 & & 0.481 & & 8.6 \\
            \midrule
            \multirow{8}{*}{\rotatebox{90}{\bf{w/ pre-training}}$\;$}
            & Denoising Score Matching & 0.849 & 0.535 &  & 0.625  & & 0.851 & & 0.529 & & 5.4\\
            & Residue Type Prediction
            &  0.845 & 0.527 & & 0.652 & & 0.847 & &  0.518 & & 5.8 \\
            & Distance Prediction & 0.825 & 0.513 & & 0.632  & & 0.836 & & 0.483 & & 7.6\\
            & Angle Prediction & \textbf{0.872} & \underline{0.545} & & 0.637  & & \textbf{0.881} & & \textbf{0.557} & & \textbf{2.0}\\
            & Dihedral Prediction & 0.852 & 0.538 &  & \bf{0.677}  & & \textbf{0.881} & & \underline{0.555} & &  \underline{2.2}\\
            & Multiview Contrast & 0.848 & 0.518 & & 0.656  & & 0.833 & & 0.490 & & 6.4 \\
            \cmidrule{2-12}
            & \textbf{\baseline} & 0.850 & 0.540 &  & 0.631  & & 0.851 & & 0.542 & & 4.4\\
            & \textbf{\method} & \underline{0.854} & \textbf{0.548} &  & \underline{0.673}  & & \underline{0.863} & & 0.554 & & \underline{2.2} \\
            % \cmidrule{2-11}
            % & \textbf{\method} (\emph{w/o} seq.\&struct. diff.) & 0.796 & 0.487 &  & 0.640 & 0.216 &  & 0.516 & & 9.2 \\
            % & \textbf{\method} (\emph{w/o} seq. diff.) & 0.850 & 0.530 & & \bf 0.670$^{\ddagger}$ & \bf 0.255$^{\ddagger}$ & & 0.472 & & 5.2 \\
            % & \textbf{\method} (\emph{w/o} struct. diff.) & 0.823 & 0.519 & & 0.642 & 0.222 & & 0.560 & & 7.0 \\
            \bottomrule
        \end{tabular}
    \end{adjustbox}
\end{table*}

%%%%%%%%%%%%%%%%%%%%%%%%%%%%%%%%%%%%%%%%%%%%%%%%%%%%%%%%%%%%

To study the effect of our proposed pre-training methods on different backbone models, we show the pre-training results on GVP~\citep{jing2021equivariant,jing2022equivariant} in this section.

\textbf{Setup.}
GVP constructs atom graphs and adds vector channels for modeling equivariant features.
The original design only includes atom types as node features, which makes pre-training tasks with residue type prediction very difficult to learn.
To address this issue, we slightly modify its architecture to add the embedding of atom and corresponding residue types as atom features.
Then, the default configurations in~\citet{jing2022equivariant} are adopted. 
We construct an atom graph for each protein by drawing edges between atoms closer than $4.5\AA$.
Each edge is featured with a 16-dimensional Gaussian RBF encoding of its Euclidean distance. 
We use five GVP layers and hidden representations with 16 vector and 100 scalar channels and use ReLU and identity for scalar and vector activation functions, respectively.
The dropout rate is set as 0.1.
The final atom representations are followed by two mean pooling layers to obtain residue and protein representations, respectively.
All other hyperparameters for pre-training and downstream tasks are the same as those in App.~\ref{supp:sec:setup}.

\textbf{Experimental results.}
The results are shown in Table~\ref{tab:atom_gvp_result}.
Among all pre-training methods, {\method}, angle prediction and dihedral prediction are the top three.
This is different from what we observe in Table~\ref{tab:atom_result}, where residue type and distance prediction are more competitive baselines.
We hypothesize that this is because GearNet-Edge includes angle information in the encoder while GVP does not.
Therefore, GVP will benefit more from pre-training methods with supervision on angles.
Nevertheless, we find that {\method} is the only method that performs well on different backbone models.

\end{document}